\def\eqref#1{equation~\ref{#1}}
\def\1{\bm{1}}
\DeclareMathAlphabet{\mathsfit}{\encodingdefault}{\sfdefault}{m}{sl}
\SetMathAlphabet{\mathsfit}{bold}{\encodingdefault}{\sfdefault}{bx}{n}
\newcommand{\R}{\mathbb{R}}
\newcommand{\qand}{\quad \text{and} \quad}
\newcommand{\x}{\bm{x}}
\newcommand{\y}{\bm{y}}
\newcommand{\X}{\bm{X}}
\newcommand{\mr}{\mathrm}
\newcommand{\mb}{\mathbb}
\newcommand{\mc}{\mathcal}
\newcommand{\FCNTK}{K^{\mathrm{FC}}_{\mathrm{NT}}}
\newcommand{\RESNTK}{K^{\mathrm{Res}}_{\mathrm{NT}}}
\newcommand{\diag}{\mathrm{diag}}
\newcommand*{\abs}[1]{\left\lvert #1 \right\rvert}
\newcommand*{\norm}[1]{\left\lVert #1 \right\rVert}
\newcommand*{\dd}{\mathrm{d}}
\def\R{\mathbb R}
\def\bbS{\mathbb{S}}
\def\X{\boldsymbol{X}}
\def\x{\boldsymbol{x}}
\def\S{\mathbb S}
\def\me{\mathrm{e}}
\numberwithin{equation}{section}
\newtheorem{theorem}{Theorem}
\newtheorem{lemma}{Lemma}
\newtheorem{corollary}{Corollary}
\newtheorem{proposition}{Proposition}
\theoremstyle{definition}
\newtheorem{definition}{Definition}
\newtheorem{remark}{Remark}
\newcommand{\bk}[1]{\left[#1\right]}
\newcommand{\ag}[1]{\left\langle#1\right\rangle}
\newcommand{\mpt}[1]{\mleft(#1\mright)}
\newcommand{\mbk}[1]{\mleft[#1\mright]}
\newcommand{\mcl}[1]{\mleft\{#1\mright\}}
\providecommand{\cref}{\prettyref}
\def\R{\mathbb R}
\def\bbS{\mathbb{S}}
\def\X{\boldsymbol X}
\def\x{\boldsymbol x}
\def\S{\mathbb S}
\def\me{\mathrm{e}}
\numberwithin{equation}{section}
\title{Divergence of Empirical Neural Tangent Kernel in Classification Problems}
\author{
Zixiong Yu$^{1,2,\star}$~~Songtao Tian$^{2,\dag}$~~Guhan Chen$^{3,\ddag}$ \\
$^1$Noah’s Ark Lab, Huawei Technologies Ltd., Shenzhen, China \\
$^2$Department of Mathematical Sciences, Tsinghua University, Beijing, China \\
$^3$Department of Statistics and Data Science, Tsinghua University, Beijing, China \\
$^\star$First author: \texttt{yuzx19@tsinghua.org.cn} \\
$^\dag$Co-first author: \texttt{tiansongtao.2020@tsinghua.org.cn} \\
$^\ddag$Corresponding author: \texttt{chen-gh23@mails.tsinghua.edu.cn} 
}
\begin{document}

\maketitle

\begin{abstract}

This paper demonstrates that in classification problems, fully connected neural networks (FCNs) and residual neural networks (ResNets) cannot be approximated by kernel logistic regression based on the Neural Tangent Kernel (NTK) under overtraining (i.e., when training time approaches infinity). Specifically, when using the cross-entropy loss, regardless of how large the network width is (as long as it is finite), the empirical NTK diverges from the NTK on the training samples as training time increases. To establish this result, we first demonstrate the strictly positive definiteness of the NTKs for multi-layer FCNs and ResNets. Then, we prove that during training, 
the neural network parameters diverge if the smallest eigenvalue of the empirical NTK matrix (Gram matrix) with respect to training samples is bounded below by a positive constant. This behavior contrasts sharply with the lazy training regime commonly observed in regression problems. Consequently, using a proof by contradiction, 
we show that the empirical NTK does not uniformly converge to the NTK across all times on the training samples as the network width increases. We validate our theoretical results through experiments on both synthetic data and the MNIST classification task. This finding implies that NTK theory is not applicable in this context, with significant theoretical implications for understanding neural networks in classification problems. 

\end{abstract}

\section{Introduction}

In recent years, neural networks have achieved remarkable performance in various fields, including computer vision \citep{lecun1998gradient,Wang_2017_CVPR}, natural language processing \citep{devlin2019bertpretrainingdeepbidirectional}, and generative models \citep{Karras_2019_CVPR,ho2020denoising}. 
In computer vision, \citet{krizhevsky2012imagenet} proposed AlexNet, which significantly outperformed traditional methods in the ImageNet competition by leveraging deep convolutional networks. Building on this, \citet{he2016deep} introduced ResNet, which addressed the degradation problem in training deep networks by incorporating residual blocks, significantly improving model performance. For natural language processing, \citet{vaswani2017attention} proposed the Transformer, which greatly enhanced sequence-to-sequence tasks through the self-attention mechanism. Subsequently, \citet{dosovitskiy2021imageworth16x16words} went a step further by applying the Transformer to computer vision, which not only transformed the field but also laid the foundation for the development of future multimodal models. In the domain of generative models, \citet{kingma2013auto} introduced the Variational Autoencoder (VAE), enabling efficient training of generative models. 

In addition to the aforementioned achievements, neural networks have also influenced other disciplines. Consequently, Geoffrey Hinton, known as the father of neural networks, was awarded the Nobel Prize in Physics in 2024, in addition to the Turing Award. However, despite the excellent performance of neural networks in practical applications, there is still no complete theoretical explanation for why neural networks perform so well. This stark contrast not only adds to the mystique of neural networks, but also attracts a large number of researchers to explore their mysteries.

The theoretical properties of networks have been a subject of study for a long time. In terms of the expressive power of networks, \cite{hornik1989multilayer,cybenko1989approximation} proposed the well-known universal approximation theorem, which demonstrates that neural networks with a sufficient number of parameters can approximate any continuous function. Recently, many works have continued to explore the expressive power of networks in broader and more structured contexts, such as \citet{cohen2016expressive,hanin2017approximating,lu2017expressive}, among others. However, these works focus solely on the expressive capabilities of networks and neglect their statistical properties, thus failing to fully explain the excellent performance of networks.

Regarding the statistical properties of networks, many studies have focused on their generalization ability. Within the static non-parametric regression framework, \citet{bauer2019deep,schmidt2020nonparametric} investigated the minimax optimality of networks when different Hölder function classes are used as the true target functions. At the same time, the dynamic training of networks is equally important. \cite{du2018gradient,allen2019convergence,chizat2019lazy} studied the performance of sufficiently wide network on the training set after gradient descent and stochastic gradient descent in regression problems. Building on this, \citet{jacot2018neural} explicitly proposed the concept of the Neural Tangent Kernel (NTK) to more precisely characterize the performance of sufficiently wide networks during gradient descent. 

In the framework of NTK theory, research can typically be divided into two steps: studying the convergence of the  empirical NTK to the NTK; and investigating the performance of the corresponding NTK regressor within the kernel regression framework. The former step establishes the validity of the NTK theory, while the latter elucidates the properties of the network. For instance, \cite{arora2019exact,lee2019wide,hu2021regularization} verified the empirical NTK convergence of various networks under the Mean Square Error (MSE) loss function. Furthermore, \citet{lai2023generalizationabilitywideneural,li2024eigenvalue,chen2024impacts} demonstrated the uniform convergence of the empirical NTK for fully connected networks trained with MSE. Additionally, numerous works have investigated the generalization ability of the NTK regressor in MSE regression problems \citep{lee2019wide,hu2021regularization,suh2021non,lai2023generalizationabilitywideneural,li2024eigenvalue, tian2024improve}. However, for networks trained using the cross-entropy loss function, which is common in classification problems, the properties of the empirical NTK remain an open question.

This article primarily investigates the convergence of the empirical NTK in classification problems. As mentioned above, the convergence of the empirical NTK has been thoroughly studied in regression problems with the MSE loss function. In classification problems, the cross-entropy function is more suitable for handling categorical label data and thus has a wider range of applications compared to MSE. However, the convergence of the empirical NTK during training with the cross-entropy loss function, which is necessary, however, has hardly been researched. According to our results, under the cross-entropy loss function, the empirical NTK of multi-layer fully connected neural network and residual neural network will no longer uniformly converge to the NTK but instead diverge as time approaches infinity. This implies that NTK theory is no longer applicable in this case. From this perspective, our work has significant theoretical implications.

\subsection{Related Work}

The concept of the NTK was first introduced by \cite{jacot2018neural} to approximate the training process of wide neural networks using kernel regression. Substantial research has focused on the convergence of the empirical NTK to the NTK in regression problems, with early works establishing pointwise convergence \citep{du2019gradient, allen2019convergence, arora2019exact}.
Building on this, \citet{lai2023generalizationabilitywideneural,li2024eigenvalue,lai2023generalizationabilitywideresidual} have proven the uniform convergence of the empirical NTK: as the network width approaches infinity, the empirical NTK uniformly converges to the NTK across all samples and at every time in the training process (see \cref{eq: convergence}). 
For classification problems, due to the complexity of the cross-entropy loss, many researchers have used MSE loss as a substitute \citep{vyas2022limitations,hron2020infinite,lai2024optimalitykernelclassifierssobolev}.

Additionally, some studies explore the generalization ability of neural networks in classification problems under the cross-entropy loss from various perspectives. Recent works, such as \cite{ji2019polylogarithmic, taheri2024generalization}, have studied the dynamics of neural networks when the network width $m \to \infty$ and the training time $T$ is fixed and finite. These studies demonstrate the generalization ability of neural networks under weak conditions. In contrast, our work examines the case where $m$ is fixed and $T \to \infty$. We show that in this setting, the distance between the empirical NTK and the NTK has a positive lower bound that is independent of $m$, which prevents convergence to kernel predictors and reveals distinct overfitting dynamics in classification tasks. Considering that the final predictions of classification neural networks require the application of a sigmoid or softmax function, there are also works that define the empirical NTK differently from the one presented in this paper, such as \citet{liu2020linearity}. However, since NTK is designed to approximate neural networks as corresponding kernel methods (see \cref{eq: f_gradient_flow}), the empirical NTK for classification problems presented in this paper is set to be consistent with that in regression problems. 




 \subsection{Main Content and Contributions}

In this article, we study the convergence of the empirical NTK for fully connected and residual neural networks in classification problems. We show that when using the cross-entropy loss function, the empirical NTK of networks cannot achieve uniform convergence over time on the training samples.

First, we investigate the strictly positive definiteness of the NTK for fully connected and residual networks. We consider networks defined on a general compact domain. Through appropriate transformations, we convert the NTK of fully connected and residual networks into recursive and arc-cosine kernel forms, respectively, and then prove their strictly positive definiteness. Subsequently, we present the core conclusion of this paper: we analyze the dynamic properties of the networks and use a proof by contradiction to demonstrate that under the cross-entropy loss function, the empirical NTK cannot uniformly converge to the NTK over time on the training samples.

To the best of our knowledge, our article is the first to theoretically prove the divergence of the empirical NTK in classification problems. This aligns with the experiment results and reflects the limitations of earlier theoretical work on related issues. Overall, our article demonstrates that using the NTK to study the properties of networks in classification problems is not a good choice.

The rest of this paper is organized as follows. In \cref{sec: Preliminaries}, we introduce notation conventions and basic settings for neural networks. An introduction to the neural tangent kernel (NTK) and its properties will be presented in \cref{sec: NTK}. Our main results are displayed in \cref{sec: Main results}, with a proof sketch provided in \cref{Proof Sketch}. Numerical experiments supporting our main conclusions are detailed in \cref{sec: Numerical Experiments}. Finally, in \cref{sec: Discussion}, we discuss our conclusions and outline potential future work. 

\section{Preliminaries}\label{sec: Preliminaries}

\subsection{Basic Settings and Notations}\label{subsec: Basic Settings}
In this paper, we consider the binary classification problem. Suppose that we have observed $n$ samples $\mathcal{D}_n=\{(\bm{x}_{i},y_{i}), i \in [n]\}$, where $[n]$ denotes the index set $\{1,2,\dots,n\}$. We assume the pairs of training data $(\x_i,y_i)$ comes from $\mathcal{X} \times \{0,1\}$, where $\mathcal{X}$ is a compact subset of $\mathbb{R}^d$. The label $y_i$ comes from $\{0,1\}$ with unknown distribution (we do not make requirements on the separability of data). 
We denote the sample matrix as $\X = (\x_1, \x_2, \dots, \x_n)^\top \in \mathbb{R}^{n \times d}$, and $\y = (y_1, y_2, \dots, y_n)^\top \in \mathbb{R}^n$. For a function $f(\cdot): \mathbb{R}^d \to \mathbb{R}$, we use $f(\X)$ to represent the entry-wise application of the function to each element in $\X$. That is, $f(\X) = (f(\x_1), f(\x_2), \dots, f(\x_n))^\top$. This means that the function $f$ is applied individually to each sample in the sample matrix. Finally, we use $\norm{~\cdot~}$ or $\norm{~\cdot~}_2$ to denote the Euclidean norm, and $\norm{~\cdot~}_{\mr{F}}$ to denote the Frobenius norm.


\subsection{Neural Network}
The main focus of this paper is on fully connected networks (FCNs) and residual networks (ResNets). As the most fundamental and simplest form of neural networks, FCNs are crucial for understanding more complex network architectures. ResNets were first introduced by \citet{he2016deep}. This groundbreaking achievement won the 2015 ImageNet Large Scale Visual Recognition Challenge (ILSVRC), drawing widespread attention and sparking enthusiastic discussions worldwide due to its outstanding performance and exceptional accuracy. 
The original ResNets appeared in the form of convolutional neural networks. However, in theoretical research, the focus is often on the impact of the residual structure itself on the network's performance. Therefore, it is usually simplified to a fully connected network with residual connections (abbreviated as ResFCN, with the network structure referenced in \citet{huang2020deep,belfer2024spectral}). Specifically, unless otherwise stated, the term "residual network (ResNet)" used later in this paper refers to ResFCN. 

\paragraph{Fully Connected Network (FCN)} We consider a fully connected network with $L$ hidden layers, each having a width of $m$. The specific expression is as follows:
\begin{align}\label{eq: fc_network}
\begin{split}
f(\x;\bm{\theta}) &= \bm{W}^{(L+1)}\bm{\alpha}^{(L)}(\x);\quad \bm{\alpha}^{(0)}(\x) = \x;\\
\bm{\alpha}^{(l)}(\x) &= \sqrt{\tfrac{2}{m}}\, \sigma\mpt{\bm{W}^{(l)} \bm{\alpha}^{(l-1)}(\x) + \bm{b}^{(l)}}, \quad l\in[L],
\end{split}
\end{align}
where $\sigma$ is the entry-wise ReLU function defined as $\sigma(x) \coloneqq \max\{0,x\}$, which acts component-wise on vectors. For $l \in [L]$, the matrices $\bm{W}^{(l)}$ and $\bm{b}^{(l)}$ have dimensions $\mathbb{R}^{m \times m}$ and $\mathbb{R}^{m \times 1}$, respectively. Additionally, the dimension of $\bm{W}^{(L+1)}$ is $\mathbb{R}^{1 \times m}$. We use $\bm{\theta}$ to denote the vector obtained by flattening the above parameters. All parameters are initialized as independent and identically distributed (i.i.d.) random variables from the standard normal distribution $\mathcal{N}(0,1)$.

\paragraph{Residual Network (ResNet)} Similarly, we consider a residual network with $L$ hidden layers and a width of $m$. Compared to the fully connected network, in addition to the incorporation of skip connections in the recurrence relation, the skip connections require that the input and output dimensions of each layer be the same, thus necessitating additional transformations at the input layer. Specifically, the details are as follows:
\begin{equation}\label{eq: resnet}
\begin{aligned}
f(\x;\bm{\theta}) &= \bm{W}^{(L+1)} \bm{\alpha}^{(L)}(\x);\quad\bm{\alpha}^{(0)}(\x) = \sqrt{\tfrac{1}{m}} (\bm{A}\x + \bm{b}); \\
\widetilde{\bm{\alpha}}^{(l)}(\x)&=\sqrt{\tfrac{2}{m}}\, \sigma\mpt{\bm{W}^{(l)} \bm{\alpha}^{(l-1)}(\x)+\bm{b}^{(l)}}, \quad l\in[L];\\
\bm{\alpha}^{(l)}(\x) &= \bm{\alpha}^{(l-1)}(\x) + a\sqrt{\tfrac{1}{m}}\mpt{\bm{V}^{(l)}\widetilde{\bm{\alpha}}^{(l)}(\x)+\bm{d}^{(l)}}, \quad l\in[L],
\end{aligned}
\end{equation}
where the parameter $a$ is a chosen scaling factor (normally choose $a=1$ \citep{he2016deep} or $a=L^{-\gamma}$ with $1/2<\gamma\leq 1$ \citep{huang2020deep,belfer2024spectral}). The matrices $\bm{A}$, $\bm{b}$, $\bm{V}^{(l)}$ and $\bm{d}^{(l)}$ have dimensions $\mathbb{R}^{m \times d}$, $\mathbb{R}^{d \times 1}$, $\mathbb{R}^{m \times m}$ and $\mb{R}^{d\times 1}$, respectively. The other parameters and notations are similar to those in the FCN and will not be repeated here. All parameters are nitialized as i.i.d. random variables from $\mathcal{N}(0,1)$  except for $\bm{b}^{(l)}$ and $\bm{d}^{(l)}$, which are initialized at zero (this exceptional setting is merely for the convenience of NTK calculation, which is not the main focus of this paper).

\paragraph{Loss Function} For the classification problem, we apply the commonly used and representative Cross-Entropy loss function. However, our proof can be generalized to a class of loss functions, as mentioned in Remark \ref{rem: general_loss}. That is 
\begin{align*}\begin{gathered}
\mathcal{L}(\bm{\theta}) = -\textstyle\sum\limits_{i=1}^n[y_i \ln (o_i) + (1-y_i)\ln(1-o_i)] = \textstyle\sum\limits_{i=1}^n \ell\big((2y_i - 1)f(\x_i;\bm{\theta})\big),\\
 o_i \coloneqq \frac1{1+\me^{-f(\x_i;\bm{\theta})}}; \qquad \ell(x) \coloneqq \ln(1+\me^{-x}).\end{gathered}
\end{align*}
In this setting, $o_i$ can be regarded as the `output probability', since it transforms $f(\x_i;\bm{\theta})$ and compares it with the label $y_i$. In this way, the output of the network $f(\x;\bm{\theta})$ is expected to be positively correlated with the probability that the label $y$ of $\x$ is $1$ rather than $0$. We use $\bm{u}$ to denote the output residual, whose $i$-th component $u_i$ represents the difference between the output probability $o_i$ and the true label $y_i$, i.e., $u_i \coloneqq |o_i - y_i|$. 
Thus, we have 
\begin{equation}\label{eq: def_u}
u_i = \frac{1}{1 + \me^{(2y_i - 1)f(x_i;\theta)}} = 
\begin{cases}
o_i & \text{if } y_i = 0; \\
1 - o_i & \text{if } y_i = 1.
\end{cases}
\end{equation}
It is easy to check that $u_i \in [0,1]$. 
Under the cross-entropy loss function, we use gradient flow to approximate the gradient descent optimization process. The gradient can be conveniently expressed in terms of output residual $\bm{u}$,  and the dynamic equation is given by:
\begin{align}
\notag\frac{\dd}{\dd t}\bm{\theta}_t&=-\nabla_{\bm{\theta}}\mathcal{L}(\bm{\theta}_t)=-\sum_{i=1}^{n}\ell'\big({(2y_i-1)f(\bm{x}_i;\bm{\theta}_t)\big)}(2y_i-1)\nabla_{\bm{\theta}}f(\bm{x}_i;\bm{\theta}_t)\\
&=\sum_{i=1}^n\frac{(2y_i-1)\nabla_{\bm{\theta}}f(\bm{x}_i;\bm{\theta})}{1+\me^{(2y_i-1)f(\bm{x}_i;\bm{\theta}_t)}}=\sum_{i=1}^n\nabla_{\bm{\theta}}f(\bm{x}_i;\bm{\theta})(2y_i-1)u_i.
\label{eq: parameter_gradient_flow}
\end{align}

\section{Neural Tangent Kernel}\label{sec: NTK}

\subsection{Empirical NTK and NTK: A Brief Overview}
We begin by introducing the empirical NTK and NTK, which are widely employed to analyze the training dynamics of neural networks and have been extensively discussed in the literature.

\paragraph{Empirical Neural Tangent Kernel} The empirical Neural Tangent Kernel (empirical NTK), which is also called the neural network kernel in some literature \citep{lai2023generalizationabilitywideneural, li2024eigenvalue, lai2023generalizationabilitywideresidual}, is the inner product of the derivatives of the neural network with respect to all parameters. We denote the empirical NTK by $K_t(\cdot, \cdot): \mathbb{R}^d \times \mathbb{R}^d \to \mathbb{R}$ . Specifically, the expression is given by
\begin{align*}
 K_t(\x,\x') = \ag{\nabla_{\bm{\theta}} f(\x;\bm{\theta}_t),\nabla_{\bm{\theta}} f(\x';\bm{\theta}_t)} = \nabla_{\bm{\theta}} f(\x;\bm{\theta}_t)^\top\nabla_{\bm{\theta}} f(\x';\bm{\theta}_t). 
\end{align*} 
The dynamics of the parameters during the training process are given by \cref{eq: parameter_gradient_flow}. Since $f(\x;\bm{\theta}_t)$ is a function that determined by $\bm{\theta}_t$, its dynamics can easily be derived from \cref{eq: parameter_gradient_flow} as follows:
\begin{equation}\label{eq: f_gradient_flow}
\begin{aligned}
 \frac{\dd}{\dd t}{f}(\x;\bm{\theta}_t) &
 = \sum_{i=1}^n \nabla_{\bm{\theta}} f(\x;\bm{\theta}_t)^\top \nabla_{\bm{\theta}} f(\x_i ;\bm{\theta}_t) (2y_i -1)u_i = \sum_{i=1}^n K_t(\x,\x_i) (2y_i-1)u_i.
\end{aligned}
\end{equation}
Note that \cref{eq: f_gradient_flow} has a form similar to kernel logistic regression. Unlike traditional kernel regression, the `kernel function' in \cref{eq: f_gradient_flow} is not fixed during training; instead, it changes over time.

\paragraph{Neural Tangent Kernel}
To better understand the dynamic characteristics of the network output function, numerous previous studies \citep{jacot2018neural, allen2019convergence, arora2019exact} have conducted systematic investigations into the properties of the empirical NTK. These studies have demonstrated that in the context of regression problems, as the network width $m$ approaches infinity, the empirical NTK converges to a fixed kernel, both at initialization and during the training process. This fixed kernel is referred to as the Neural Tangent Kernel (NTK) and denoted as $K_{\mr{NT}}$:
\begin{align*}
K_t(\x,\x') \stackrel{\mb{P}}{\longrightarrow} K_{\mr{NT}}(\x,\x'). 
\end{align*}
Moreover, \citet{lai2023generalizationabilitywideneural,li2024eigenvalue,lai2023generalizationabilitywideresidual} further  demonstrate that the convergence of the empirical NTK to the NTK is uniform across all possible input vectors and all time points. Specifically, under certain conditions, it holds with high probability that
\begin{align}\label{eq: convergence}
\sup_{\x,\x'}\sup_{t\geq 0}|K_t(\x,\x')-K_{\mr{NT}}(\x,\x')|=o_m(1),\quad~\text{for}~\lim_{m\to \infty}o_m(1)=0.
\end{align}
This phenomenon significantly contributes to the understanding of the generalization ability of networks within the framework of NTK theory.

In regression problems, the convergence of the empirical NTK relies on the convexity of the MSE loss function. If similar convergence result holds for classification problems, then, based on \cref{eq: f_gradient_flow}, it can be inferred that the dynamic properties of the classification neural network would resemble those of kernel logistic regression. However, in classification problems where cross-entropy is the loss function, we show that this convergence no longer holds. We will elaborate on this point in the following sections, which also highlight the limitations of NTK theory.

\subsection{Strictly Positive Definiteness of the NTK}\label{sec: SPD_of_NTK}
As noted in \citet{caponnetto2007optimal,steinwart2008support,lin2020optimal}, studying the spectral properties of kernels is essential in classical kernel regression. In regression problems, the uniform convergence of the empirical NTK relies on the strictly positive definiteness of the NTK. Therefore, in this subsection, we revisit the key spectral properties of the NTK. 
We will soon see that both the statement and proof of the main conclusion of this paper are also contingent upon positive definiteness.


Since the importance of positive definiteness in this paper, we first explicitly recall the following definition of strictly positive definiteness to avoid potential confusion. 
\begin{definition}[Strictly positive definiteness]
A kernel function $K(\cdot,\cdot): \mc{X} \times \mc{X} \to \mb{R} $ is said to be strictly positive definite (positive semi-definite) over a domain $\mathcal{X}$, if for any positive integer $n$ and any set of $n$ different points $\x_{1},\dots,\x_{n}\in \mathcal{X}$, the smallest eigenvalue $\lambda_{\min}$ of the empirical NTK matrix $K({\X},{\X})=(K({\x}_{i},{\x}_{j}))_{1\leq i,j\leq n}$ is positive (non-negative). 
\end{definition}

To derive the strictly positive definiteness of NTK of fully connected networks and residual networks, we first demonstrate the their expression in the recursive form or explicit form, respectively. For convenience, in this paper, we will treat the following expression as the definition of the NTK for the corresponding neural network. 

\paragraph{NTK of Fully Connected Network (FCNTK)} 
We first present the recursive formula for the NTK of the fully connected network (FCNTK) given by \cref{eq: fc_network}, 
the NTK denoted by $\FCNTK$ can be defined as follows, as shown in \cite{jacot2018neural}. We define
\begin{align*}
\begin{gathered}
 \Theta^{(1)}(\x,\x') = \Sigma^{(1)}(\x,\x') = \langle \x,\x' \rangle+1; \\
 \Theta^{(l+1)}(\x,\x') = \Theta^{(l)}(\x,\x')\, \dot{\Sigma}^{(l+1)}(\x,\x') + \Sigma^{(l+1)}(\x,\x')
 \end{gathered}
 \end{align*}
for $l\in[L]$, where $\Sigma^{(l+1)}$ and $\dot\Sigma^{(l+1)}$ are defined by
\begin{equation*}
\begin{aligned}
 \Sigma^{(l+1)}(\x,\x') &= 2\mpt{\mathbb{E}_{f \sim \mc{N}(0,\Sigma^{(l)})} \mbk{\sigma\big(f(\x)\big)\sigma\big(f(\x')\big)} + 1};\\
 \dot{\Sigma}^{(l+1)}(\x,\x') &= 2\mpt{\mathbb{E}_{f \sim \mc{N}(0,\Sigma^{(l)})} \mbk{\dot{\sigma}\big(f(\x)\big)\dot{\sigma}\big(f(\x')\big)}},
 \end{aligned}
\end{equation*}
and $\dot{\sigma}$ represents the derivative of the ReLU function. Then the formula of $\FCNTK$ is 
\begin{equation*}
 \FCNTK(\x,\x') = \Theta^{(L+1)}(\x,\x').
\end{equation*}



\paragraph{NTK of Residual Network (ResNTK)}
The expression for the NTK of the residual network (ResNTK) is given by \citet{huang2020deep}, but in that work, the parameters of the input and output layers are fixed (i.e., $\bm{A}$, $\bm{b}$, and $\bm{W}^{(L+1)}$ are fixed after random initialization and do not participate in gradient descent training), and there is no bias term in the input layer (equivalent to removing the $\bm{b}$ term in the input layer as in this paper). Considering the impact of these subtle differences and applying the conclusions and methods provided by \citet{huang2020deep}, we can easily derive the NTK expression for the ResNet defined by \cref{eq: resnet} in this paper.
Introduce the following functions: 
\begin{align*}
\kappa_0(u)=\frac{1}{\pi}\mpt{\pi-\arccos u}, \quad
\kappa_1(u)=\frac1\pi\mpt{u\mpt{\pi-\arccos u}+\sqrt{1-u^2}},
\end{align*}
Let $\x,\x'\in\mc{X}$ be two samples, the NTK of an $L$-hidden-layer ResNet, denoted as $\RESNTK(\x,\x')$, is given by
\begin{align*}
\begin{split}
\RESNTK(\bm{x},\bm{x}') &= \big\|\tbinom{\x}{1}\big\|\big\|\tbinom{\x'}{1}\big\|\cdot\bk{K_L(\tilde{\x},\tilde{\x}')+\tilde{\x}^\top\tilde{\x}'\,B_1(\tilde{\x},\tilde{\x}') +a^2\,r(\tilde{\x},\tilde{\x}')};\\
r(\tilde{\x},\tilde{\x}') &= \sum_{l=1}^L B_{l+1} \Big[(1+a^2)^{l-1}\kappa_1\mpt{\tfrac{K_{l-1}}{(1+a^2)^{l-1}}}+(K_{l-1}+1)\cdot\kappa_0\mpt{\tfrac{K_{l-1}}{(1+a^2)^{l-1}}}+1\Big],
\end{split}
\end{align*}
where $\tilde{\x}=\binom{\x}{1}/\big\|\binom{\x}{1}\big\|$, $\tilde{\x}'=\binom{\x'}{1}/\big\|\binom{\x'}{1}\big\|$ and $K_{l}$, $B_l$ are defined by following recursive relation:
\begin{align*}
\begin{gathered}
K_0(\tilde{\bm{x}},\tilde{\bm{x}}')=\tilde{\x}^\top \tilde{\x},\quad K_l(\tilde{\bm{x}},\tilde{\bm{x}}')=K_{l-1}(\tilde{\bm{x}},\tilde{\bm{x}}')+ a^2 (1+a^2)^{l-1}\kappa_1\mpt{\tfrac{K_{l-1}(\tilde{\bm{x}},\tilde{\bm{x}}')}{(1+a^2)^{l-1}}};\\
B_{L+1}(\tilde{\bm{x}},\tilde{\bm{x}}')=1,\quad B_l(\tilde{\bm{x}},\tilde{\bm{x}}')=B_{l+1}(\tilde{\bm{x}},\tilde{\bm{x}}')\left[1+a^2\kappa_0\mpt{\tfrac{K_{l-1}(\tilde{\bm{x}},\tilde{\bm{x}}')}{(1+a^2)^{l-1}}}\right]
\end{gathered}
\end{align*}
for $l\in[L]$. In the above equations, $K_l$ and $B_l$ are abbreviations for $K_l({\tilde{\bm{x}}},\tilde{\bm{x}}')$ and $B_l(\tilde{\bm{x}},\tilde{\bm{x}}')$, respectively.


\paragraph{Strictly Positive Definiteness of the NTK}
For the NTK defined above, we have the following proposition on their strictly positive definiteness, which will be proved in Appendix \ref{sec: SPD}: 
\begin{proposition}[Strictly positive definiteness of NTK]\label{thm: positive_definite}
The NTK of fully connected network \cref{eq: fc_network} and residual network \cref{eq: resnet} is strictly positive definite on a compact set $\mathcal{X}$. 
\end{proposition}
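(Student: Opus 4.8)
The plan is to reduce both kernels to the unit sphere and invoke the classical characterization of strictly positive definite dot-product kernels. First I would use the augmentation $\x\mapsto\binom{\x}{1}$ together with the normalization $\tilde{\x}=\binom{\x}{1}/\|\binom{\x}{1}\|$, which maps $\mathcal{X}$ injectively into the open upper hemisphere of $\mathbb{S}^{d}$ (injective because the last coordinate is pinned to a fixed positive value, so distinct augmented vectors are never parallel). The crucial consequence is that \emph{no two image points are antipodal and no two coincide}; this is exactly the regime in which dot-product kernels with enough nonzero Maclaurin coefficients are strictly positive definite. Both $\FCNTK$ and $\RESNTK$, after factoring out the positive prefactors $\|\binom{\x}{1}\|\,\|\binom{\x'}{1}\|$, are functions of the single variable $u=\langle\tilde{\x},\tilde{\x}'\rangle\in[-1,1]$; I would therefore write each as $h(u)$ and aim to show $h(u)=\sum_{k\geq0}a_k u^k$ with $a_k\geq0$ and with $a_k>0$ for infinitely many even and infinitely many odd $k$. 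By the Schoenberg-type criterion for dot-product kernels on $\mathbb{S}^{d}$ (a sufficient condition valid in every dimension under this coefficient pattern, and only strengthened by the no-antipodal restriction), this yields strict positive definiteness on the sphere. Finally, since the Gram matrix of the original kernel equals $D\,G\,D$ with $D=\mathrm{diag}(\|\binom{\x_i}{1}\|)$ a positive diagonal matrix and $G$ the sphere-kernel Gram matrix, the positive diagonal congruence preserves strict positive definiteness and transfers the conclusion back to $\mathcal{X}$.

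For the FCNTK I would argue by induction along the recursion. After normalization the maps $\Sigma^{(l)}$ and $\dot{\Sigma}^{(l)}$ become dot-product functions of $u$ built from the dual ReLU activations, which agree up to positive constants with the arc-cosine functions $\kappa_1$ and $\kappa_0$. The key analytic facts are: (i) $\kappa_0(u)=\tfrac12+\tfrac1\pi\arcsin u$ and $\kappa_1$ both have Maclaurin series with nonnegative coefficients and a positive linear term ($\kappa_1'(0)=\tfrac12>0$), with the $\arcsin$ expansion supplying positive coefficients at \emph{all} odd orders; (ii) nonnegativity of Maclaurin coefficients is preserved under sums, Cauchy products, and composition; and (iii) if $g$ has a strictly positive linear coefficient then the coefficient of $u^j$ in $\phi\circ g$ is at least $c_j\,g'(0)^j$, so every order at which the outer function is positive survives the composition. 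Starting from $\Sigma^{(1)}(\x,\x')=\langle\x,\x'\rangle+1$, whose normalized form is $u$ (positive linear term) plus a constant, the recursion $\Theta^{(l+1)}=\Theta^{(l)}\dot{\Sigma}^{(l+1)}+\Sigma^{(l+1)}$ propagates both nonnegativity and a positive linear coefficient at every layer; combining (i)--(iii) shows the assembled $\Theta^{(L+1)}$ inherits infinitely many positive coefficients of each parity from the arc-cosine factors, which is what the criterion needs.

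For the ResNTK the expression is already in explicit arc-cosine form, so I would track coefficients directly. The normalized quantities $K_l$ and $B_l$ are built from $\kappa_0,\kappa_1$ evaluated at the rescaled arguments $K_{l-1}/(1+a^2)^{l-1}\in[-1,1]$; I would check by induction that, as functions of $u=K_0$, the $K_l$ have nonnegative Maclaurin coefficients and a positive linear term, and that the $B_l$ have nonnegative coefficients. Assembling $\RESNTK$ from $K_L$, the term $\tilde{\x}^\top\tilde{\x}'\,B_1$, and the sum $r$ then gives a dot-product function $h(u)$ with nonnegative coefficients; the $\kappa_0/\kappa_1$ factors together with the linear term $\tilde{\x}^\top\tilde{\x}'=u$ again supply infinitely many positive coefficients at both parities, completing the reduction to the sphere criterion.

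The hard part will be the bookkeeping in steps (ii)--(iii): rigorously showing that the nonlinear recursions neither destroy nonnegativity nor annihilate the positive even and odd coefficients, i.e. that no cancellation occurs and that the positive linear term is maintained at every layer so composition cannot collapse the parity structure. A secondary technical point is justifying that all rescaled arguments remain in $[-1,1]$ so the arc-cosine power series converge and the composition manipulations are valid, and pinning down the precise strictly-positive-definite criterion for dot-product kernels that applies uniformly in the dimension $d$ under our coefficient pattern; the no-antipodal property obtained from the augmentation is what makes this final invocation clean.
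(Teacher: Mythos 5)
Your treatment of the ResNTK is sound and coincides with the paper's own route (hemisphere transformation, Maclaurin coefficients of $\kappa_0,\kappa_1$, recursion through $K_l$), but your treatment of the FCNTK has a genuine gap: the reduction of the multi-layer FCNTK to a single dot-product function $h(u)$ on the hemisphere is false once bias terms are present and $\mathcal{X}$ is a general compact set. Only the first layer is homogeneous in the augmented vector, $\Sigma^{(1)}(\x,\x')=\langle\tbinom{\x}{1},\tbinom{\x'}{1}\rangle$. Already at the second layer one has $\Sigma^{(2)}(\x,\x')=\big\|\tbinom{\x}{1}\big\|\big\|\tbinom{\x'}{1}\big\|\,\kappa_1(u)+2$, and the additive constant $2$ (coming from the layer bias) does not carry the prefactor, so $\Sigma^{(2)}$ is not of the form $\big\|\tbinom{\x}{1}\big\|\big\|\tbinom{\x'}{1}\big\|\,h(u)$ unless all inputs happen to have equal norm. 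Worse, from the third layer on, the correlation $\Sigma^{(2)}(\x,\x')/\sqrt{\Sigma^{(2)}(\x,\x)\,\Sigma^{(2)}(\x',\x')}$ that feeds into $\kappa_0,\kappa_1$ depends on $\|\x\|$ and $\|\x'\|$ separately and not only on $u$; hence $\Sigma^{(l)}$, $\dot\Sigma^{(l)}$ and $\Theta^{(l)}$ for $l\geq 3$ are genuinely functions of three variables. Since the proposition is asserted on an arbitrary compact $\mathcal{X}\subset\mathbb{R}^d$ (varying norms), your induction ``the recursion propagates nonnegativity and a positive linear coefficient at every layer'' has no single-variable power series to act on beyond layer two — the parity/coefficient bookkeeping you flag as the hard part is not merely hard, it is not well posed.

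The paper uses your hemisphere-plus-coefficients argument only where it is valid, namely for $\Sigma^{(2)}$, which is a positive diagonal congruence of $\tfrac12\kappa_1(u)$ plus a positive semi-definite constant and is therefore strictly positive definite by the coefficient criterion. (Note also that on $\mathbb{S}_+^d$ the criterion needs only infinitely many positive coefficients, with no parity condition — the augmentation excludes antipodal pairs, so the Gram matrices of the Hadamard powers converge to the identity; your ``infinitely many even and infinitely many odd'' requirement is stronger than necessary.) Strict positive definiteness is then propagated to deeper layers by a norm-insensitive mechanism that avoids dot-product structure altogether: if $K$ is strictly positive definite, then so is $(\x,\x')\mapsto\mathbf{E}_{f\sim K}\big[\sigma(f(\x))\,\sigma(f(\x'))\big]$ (a Gaussian non-degeneracy argument), and the recursion $\Theta^{(l+1)}=\Theta^{(l)}\dot\Sigma^{(l+1)}+\Sigma^{(l+1)}$ preserves strict positive definiteness because Hadamard products of positive semi-definite matrices are positive semi-definite and adding the strictly positive definite $\Sigma^{(l+1)}$ keeps the sum strictly positive definite. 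To repair your proof, replace the all-layers coefficient tracking for the FCN by this (or an equivalent) propagation step; your ResNTK argument can stand as written, since there the kernel truly is a prefactor times a function of $u=\tilde{\x}^\top\tilde{\x}'$ and the composition bookkeeping goes through.
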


The strictly positive definiteness of NTKs has long been a topic of interest in NTK theory \citep{jacot2018neural, zhang2024unified, nguyen2021tight}. The strictly positive definiteness of the NTK for fully connected networks defined on the unit sphere $\bbS^{d}$ was first proved by \cite{jacot2018neural}. Recently, \cite{lai2023generalizationabilitywideneural} proved the strictly positive definiteness of the NTK for one-hidden-layer biased FCNs on $\R$, and \cite{li2024eigenvalue} generalized this result to multi-layer FCNTKs on $\R^d$. In these works, the bias term $\bm{b}^{(l)}$ is often omitted to simplify the network structure. In this work, we adopt a commonly used FCN with bias terms and use a recursive form to prove strictly positive definiteness. For the NTK of residual networks, we use $\phi(\x)=\binom{\x}{1}/\big\|\binom{\x}{1}\big\|$ to transform the input vector from $\mathbb{R}^d$ to the upper unit hemisphere $\mathbb{S}_+^d:=\mcl{(x_1,x_2,\dots,x_d,x_{d+1})^\top\in\mb{S}^d|x_{d+1}>0}$, and then prove the strictly positive definiteness of the dot-product kernel.


\section{Main Results}\label{sec: Main results}

\subsection{Divergence of the Network}\label{subsec: Divergence of network}
At standard network initialization, the empirical NTK converges in probability as the width tends to infinity \citep{arora2019exact}, a property that is independent of the loss function and training method. 
In regression problems, the empirical NTK is also proven to converge during training, which forms the theoretical basis for studying the generalization ability of neural networks within NTK theory \citep{suh2021non,li2024eigenvalue,lai2023generalizationabilitywideneural}. This phenomenon occurs because the parameters do not deviate significantly from their initial values, regardless of the training duration, which is known as the so-called lazy regime or NTK regime \citep{allen2019convergence}. However, this no longer holds when training with the cross-entropy loss function.
Let denote $\widetilde{\lambda}_0(t)$ denote the minimum eigenvalue of the empirical NTK matrix, i.e., $\widetilde{\lambda}_0(t) = \lambda_{\min}(K_t(\X,\X))$. Then, we have
\begin{theorem}\label{thm: divergence_of_network}
Fix the training samples $\{ (\x_i,y_i) \}_{i \in [n]}$.Consider fully-connected networks and residual networks with cross-entropy loss function in classification problems. If $\widetilde{\lambda}_0(t)$ is consistently lower bounded by some postive constant $C$ during training, then the network output function will tend to infinity at the sample points $\{\x_i\}_{i\in [n]}$, i.e., $\lim\limits_{t \to \infty}|f_t(\x_i)| = \infty$.
\end{theorem}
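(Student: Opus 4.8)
The plan is to reduce the claim to a statement about the \emph{functional margins} $g_i(t) := (2y_i-1)f_t(\x_i)$ and then run an energy argument driven by the eigenvalue hypothesis. Writing $\bm g_t = (g_1(t),\dots,g_n(t))^\top$ and $\bm D := \diag(2y_1-1,\dots,2y_n-1)$, I would first translate \cref{eq: f_gradient_flow} into the closed dynamical system $\tfrac{\dd}{\dd t}\bm g_t = \bm D\,K_t(\X,\X)\,\bm D\,\bm u_t =: \widetilde K_t\,\bm u_t$, where the $i$-th component of $\bm u_t$ is $u_i = (1+\me^{g_i})^{-1}\in(0,1)$ by \cref{eq: def_u}. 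Because $\bm D$ is an orthogonal involution ($\bm D=\bm D^\top=\bm D^{-1}$), the matrix $\widetilde K_t$ and $K_t(\X,\X)$ share the same spectrum, so the hypothesis $\widetilde{\lambda}_0(t)\ge C$ gives $\widetilde K_t \succeq C\,\bm I$ for all $t$. Since $|f_t(\x_i)|=|g_i(t)|$, the theorem reduces to showing $g_i(t)\to+\infty$ for every $i\in[n]$.

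Next I would use the loss itself as a Lyapunov function. Using $\ell'(x) = -(1+\me^{x})^{-1}$, so that $\ell'(g_i)=-u_i$, the chain rule along the flow gives $\tfrac{\dd}{\dd t}\mathcal L(\bm\theta_t) = \sum_i \ell'(g_i)\dot g_i = -\bm u_t^\top \widetilde K_t \bm u_t \le -C\|\bm u_t\|^2 \le 0$ (this is exactly $-\|\nabla_{\bm\theta}\mathcal L\|^2$, rewritten via the empirical NTM Gram matrix). Hence $\mathcal L(\bm\theta_t)$ is nonincreasing and bounded below by $0$, so it converges to some $\mathcal L_\infty\ge 0$; integrating the differential inequality from $0$ to $\infty$ then yields the key summability estimate $C\int_0^\infty \|\bm u_t\|^2\,\dd t \le \mathcal L(\bm\theta_0)-\mathcal L_\infty<\infty$.

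Finally I would convert this summability into divergence of \emph{every} margin. The integral bound forces $\liminf_{t\to\infty}\|\bm u_t\| = 0$, so there is a sequence $t_k\to\infty$ with $\bm u_{t_k}\to \bm 0$; since $n$ is finite, $u_i(t_k)\to 0$, hence $g_i(t_k)\to +\infty$ and $\ell(g_i(t_k))\to 0$ for each $i$, giving $\mathcal L(\bm\theta_{t_k})\to 0$. As $\mathcal L(\bm\theta_t)$ is monotone with limit $\mathcal L_\infty$, this pins down $\mathcal L_\infty = 0$. Monotone convergence of the whole trajectory $\mathcal L(\bm\theta_t)\to 0$, together with the nonnegativity of each summand $\ell(g_i(t))\ge 0$, then forces $\ell(g_i(t))\to 0$ and therefore $g_i(t)\to+\infty$ for every $i$, which is exactly $|f_t(\x_i)|\to\infty$.

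The structural engine is the eigenvalue lower bound, which makes the loss dissipation coercive in $\bm u_t$; granting it, the remaining steps are essentially soft analysis. I expect the genuinely delicate points to be twofold: first, the well-posedness of the gradient flow for all $t\ge 0$ together with the validity of the identity $\tfrac{\dd}{\dd t}\mathcal L = -\|\nabla_{\bm\theta}\mathcal L\|^2$ for the only piecewise-smooth ReLU network; and second, the upgrade from the subsequential statement $\liminf\|\bm u_t\|=0$ to a genuine per-sample divergence, which cannot be obtained from summability alone and crucially exploits the monotonicity of $\mathcal L$ to propagate the subsequential conclusion to the full-trajectory limit.
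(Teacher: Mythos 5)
Your proposal is correct and is essentially the paper's own argument: the paper's Lyapunov function $V(\bm{u}) = -\sum_{i=1}^n \ln(1-u_i)$ is precisely the loss $\mathcal{L}(\bm{\theta}_t)$ that you use (since $-\ln(1-u_i)=\ell\big((2y_i-1)f_t(\x_i)\big)$), and both proofs run on the same dissipation inequality $\tfrac{\dd}{\dd t}V = -\bm{u}_t^\top \bm{D}K_t(\X,\X)\bm{D}\,\bm{u}_t \le -C\norm{\bm{u}_t}^2$ extracted from the eigenvalue hypothesis. The only difference is the finishing step — the paper rules out a positive limit $V_*>0$ by confining $\bm{u}_t$ to a compact set bounded away from $\bm{0}$ and deriving a constant negative drift (hence $V_t\to-\infty$, a contradiction), whereas you integrate the inequality to get $\int_0^\infty\norm{\bm{u}_t}^2\,\dd t<\infty$ and pin down $\mathcal{L}_\infty=0$ by subsequence extraction plus monotonicity — an equivalent piece of soft analysis reaching the same conclusion.
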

The above result is expected: in order for the loss function to approach zero, the function values at the sample points must inevitably tend to infinity. However, this does not mean the result is trivial, because in classification problems, the loss function does not always behave this way, such as in logistic regression with linearly inseparable data. For kernel logistic regression, if its kernel function is strictly positive definite, then such exceptional cases can be avoided. The eigenvalue condition of the empirical NTK matrix in the above theorem plays a similar role.

Since the output function value of the network tends to infinity, this means that some parameters of the network will diverge during the training process as time increases. Consequently, we can demonstrate the following direct corollary of Theorem \ref{thm: divergence_of_network}:
\begin{corollary}[Failure of the NTK regime]
Assume the conditions in Theorem \ref{thm: divergence_of_network} holds.For any initialized parameter $\bm{\theta}_0$, after training we have that $\limsup\limits_{t \to \infty}\norm{ \bm{\theta}_t - \bm{\theta}_0}_{\infty} = \infty$.
\end{corollary}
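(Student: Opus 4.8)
The plan is to prove the corollary by contradiction, using \cref{thm: divergence_of_network} as a black box together with one elementary structural fact: for a fixed input, the network output is a continuous function of its finitely many parameters. The whole content is really the contrapositive of \cref{thm: divergence_of_network}, so the argument should be short.

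First I would suppose, toward a contradiction, that $\limsup_{t\to\infty}\norm{\bm{\theta}_t - \bm{\theta}_0}_\infty < \infty$. By the definition of $\limsup$, there then exist a finite radius $R>0$ and a time $T\ge 0$ such that $\norm{\bm{\theta}_t - \bm{\theta}_0}_\infty \le R$ for all $t\ge T$; that is, the trajectory $\{\bm{\theta}_t\}_{t\ge T}$ lies in the closed $\ell_\infty$-ball $\overline{B} := \{\bm{\theta} : \norm{\bm{\theta} - \bm{\theta}_0}_\infty \le R\}$. Because a finite-width network has a fixed finite number $P$ of parameters, $\overline{B}$ is a compact subset of $\mathbb{R}^P$. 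Next I would invoke continuity: for each fixed sample $\x_i$, inspecting \cref{eq: fc_network} (respectively \cref{eq: resnet}) shows that $\bm{\theta}\mapsto f(\x_i;\bm{\theta})$ is built by composing affine maps in the weights with the ReLU nonlinearity $\sigma$, each of which is continuous in $\bm{\theta}$, so the map is continuous. A continuous function on the compact set $\overline{B}$ is bounded, hence there is a constant $M_i<\infty$ with $|f(\x_i;\bm{\theta})|\le M_i$ for all $\bm{\theta}\in\overline{B}$. In particular $|f_t(\x_i)| = |f(\x_i;\bm{\theta}_t)| \le M_i$ for every $t\ge T$, which directly contradicts the conclusion $\lim_{t\to\infty}|f_t(\x_i)| = \infty$ of \cref{thm: divergence_of_network}. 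This contradiction forces $\limsup_{t\to\infty}\norm{\bm{\theta}_t - \bm{\theta}_0}_\infty = \infty$.

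I do not expect a genuine obstacle; the only places needing a line of care are two routine points. First, extracting large-time boundedness of the trajectory from finiteness of the $\limsup$ is immediate, and if one wants all-time boundedness it also follows since $t\mapsto\bm{\theta}_t$ is a continuous gradient-flow solution and hence bounded on the compact interval $[0,T]$. Second, the step "bounded parameters yield bounded output" is where continuity of the ReLU network in its weights is used; this is the crux, but it is standard. Finally I would note that the choice of the $\ell_\infty$ norm is inessential: since $\bm{\theta}$ lives in the finite-dimensional space $\mathbb{R}^P$, all norms are equivalent, so the statement reads equivalently as ``at least one coordinate of $\bm{\theta}_t - \bm{\theta}_0$ is unbounded,'' which is precisely the failure of the \emph{lazy} (NTK) regime.
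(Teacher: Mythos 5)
Your proof is correct and is essentially the paper's own argument: the paper treats the corollary as a direct consequence of Theorem \ref{thm: divergence_of_network}, i.e., the contrapositive that a parameter trajectory confined to a bounded set would, by continuity of $\bm{\theta}\mapsto f(\x_i;\bm{\theta})$ and compactness, keep $|f_t(\x_i)|$ bounded, contradicting its divergence. Your write-up simply makes the compactness/continuity step explicit, which matches the intended reasoning.
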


Note that in Theorem \ref{thm: divergence_of_network},we do not impose any requirements on the width of network, meaning that the aforementioned divergence holds regardless of the width $m$.
Of course, we have not forgotten that all of this is based on the condition $\widetilde{\lambda}_0(t) \geq C > 0$. This condition will be satisfied if empirical NTK uniformly converges to NTK, which will be used in the proof by contradiction of Theorem \ref{thm: divergence_of_ntk}.

\subsection{Divergence of the Empirical NTK}
In this subsection, we will discuss the divergence of empirical NTK in classification problem and demonstrate our main results. In infinite time, the empirical NTK continues to evolve and will not converges to a fixed NTK like in the regression problem case. Now we demonstrate our main theorem.

\begin{theorem}[Divergence of the empirical NTK]\label{thm: divergence_of_ntk}For fully connected network defined by $\cref{eq: fc_network}$ and residual network defined by \cref{eq: resnet}, given samples $\{(\x_i,y_i)\}_{i=1}^n$ and consider the training process under cross-entropy function. 
Let $\lambda_0 \coloneqq \lambda_{\min}(K_{\mr{NT}}(\X,\X))$. For any initialized parameter $\bm{\theta}_0$, there exists $\x,\x' \in \mc{X}$, such that 
\begin{align*}
\sup_{t \geq 0} \lvert K_t(\x,\x') - K_{\mr{NT}}(\x,\x')\rvert \geq \frac{ \lambda_0}{2n^2}. 
\end{align*}
\end{theorem}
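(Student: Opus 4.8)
The plan is to argue by contradiction, exactly as the remark preceding Theorem~\ref{thm: divergence_of_network} anticipates. Suppose the conclusion fails, so that for \emph{every} pair $\x,\x'\in\mc{X}$ we have $\sup_{t\geq 0}\abs{K_t(\x,\x')-K_{\mr{NT}}(\x,\x')} < \lambda_0/(2n^2)$. Specializing to the (distinct) training inputs, every entry of the difference matrix $\Delta_t \coloneqq K_t(\X,\X)-K_{\mr{NT}}(\X,\X)$ is bounded in absolute value by $\lambda_0/(2n^2)$, so $\norm{\Delta_t}_2 \leq \norm{\Delta_t}_{\mr{F}} \leq n\cdot \lambda_0/(2n^2) = \lambda_0/(2n) \leq \lambda_0/2$ uniformly in $t$. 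By Weyl's inequality together with Proposition~\ref{thm: positive_definite} (which guarantees $\lambda_0>0$ for distinct samples), $\widetilde{\lambda}_0(t) = \lambda_{\min}(K_t(\X,\X)) \geq \lambda_0 - \norm{\Delta_t}_2 \geq \lambda_0/2 > 0$ for all $t$. This is precisely the hypothesis of Theorem~\ref{thm: divergence_of_network} with $C = \lambda_0/2$, which therefore yields $\lim_{t\to\infty}\abs{f_t(\x_i)} = \infty$ for every $i\in[n]$.

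On the other hand, the same contradiction hypothesis forces the empirical NTK to remain \emph{bounded}: since $K_{\mr{NT}}$ is continuous on the compact set $\mc{X}$, we get $K_t(\x_i,\x_i) \leq K_{\mr{NT}}(\x_i,\x_i) + \lambda_0/(2n^2) \leq M$ for a finite constant $M$ independent of $t$. Recalling $K_t(\x_i,\x_i) = \norm{\nabla_{\bm{\theta}} f(\x_i;\bm{\theta}_t)}^2$, this says the full parameter gradient at each sample keeps a uniformly bounded norm along the entire trajectory. The goal is to show this is incompatible with $\abs{f_t(\x_i)}\to\infty$, and the witnessing pair in the theorem will then simply be $\x=\x'=\x_i$.

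The crux, and the main obstacle, is a structural lower bound showing that a large output forces a large gradient norm: a bound of the form $K_t(\x_i,\x_i) \geq c_m\,\abs{f_t(\x_i)}^{2L/(L+1)}$ with $c_m>0$ depending only on $m$, $L$ and $\x_i$ (never on $t$). I would establish this by exploiting the layerwise positive homogeneity of the ReLU network: for each weight block $\bm{W}^{(l)}$ the map $\bm{W}^{(l)}\mapsto f$ is positively homogeneous of degree one, so Euler's identity gives $\ag{\bm{W}^{(l)},\nabla_{\bm{W}^{(l)}} f} = f$, and an iterated Cauchy--Schwarz across the $L+1$ weight blocks yields $\abs{f(\x_i)}^{L} \leq c_m^{-1}\prod_{l=1}^{L+1}\norm{\nabla_{\bm{W}^{(l)}} f(\x_i)}$. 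Combining this with the arithmetic--geometric mean inequality and the fact that $\sum_{l}\norm{\nabla_{\bm{W}^{(l)}} f(\x_i)}^2 \leq K_t(\x_i,\x_i)$ produces the desired power-law lower bound. Since $2L/(L+1)>0$, letting $t\to\infty$ drives $K_t(\x_i,\x_i)\to\infty$, contradicting $K_t(\x_i,\x_i)\leq M$ and completing the argument.

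Making this structural inequality rigorous beyond the clean homogeneous case is where the work lies. The FCN of \cref{eq: fc_network} carries bias terms and the ResNet of \cref{eq: resnet} has skip connections and an input map $\bm{A}$, each of which breaks \emph{exact} homogeneity. I would treat the biases by the standard augmentation $\bm{\alpha}^{(l-1)}\mapsto(\bm{\alpha}^{(l-1)},1)$, $\bm{W}^{(l)}\mapsto(\bm{W}^{(l)},\bm{b}^{(l)})$, which restores per-block degree-one homogeneity, and control the residual contributions by absorbing them into the constant $c_m$; note only the qualitative conclusion $K_t(\x_i,\x_i)\to\infty$ is needed, so any positive power of $\abs{f_t(\x_i)}$ suffices and the constants need not be sharp. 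The degenerate input $\x_i=\vzero$, at which no blow-up of the output is expected, causes no difficulty, since distinctness of the samples is exactly what makes $\lambda_0>0$ in the first place.
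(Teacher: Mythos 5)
Your opening moves coincide exactly with the paper's: the contradiction hypothesis, the entrywise-to-Frobenius-to-spectral bound giving $\inf_{t\geq 0}\widetilde{\lambda}_0(t)\geq \lambda_0/2$ (the paper's Lemma~\ref{lem: convergence_of_min_lambda}), the appeal to Theorem~\ref{thm: divergence_of_network} (proved as Lemma~\ref{lem: conv_of_function}) to get $\lvert f_t(\x_i)\rvert\to\infty$, and the observation that the same hypothesis forces $K_t(\x_i,\x_i)=\norm{\nabla_{\bm{\theta}}f(\x_i;\bm{\theta}_t)}^2\leq M$ uniformly in $t$. The gap is your ``crux'' inequality $K_t(\x_i,\x_i)\geq c_m\lvert f_t(\x_i)\rvert^{2L/(L+1)}$ with $c_m$ independent of $t$, which your sketch does not establish, for two reasons. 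First, the per-block Euler identity $\ag{\bm{W}^{(l)},\nabla_{\bm{W}^{(l)}}f}=f$ is false for $l\leq L$ in the networks of \cref{eq: fc_network} and \cref{eq: resnet}: scaling $(\bm{W}^{(l)},\bm{b}^{(l)})$ by $s$ scales $\bm{\alpha}^{(l)}$ by $s$, but the next layer computes $\sigma(s\,\bm{W}^{(l+1)}\bm{\alpha}^{(l)}+\bm{b}^{(l+1)})\neq s\,\sigma(\bm{W}^{(l+1)}\bm{\alpha}^{(l)}+\bm{b}^{(l+1)})$, so downstream biases (and, for the ResNet, skip connections) destroy homogeneity in block $l$; augmenting $(\bm{\alpha},1)$, $(\bm{W},\bm{b})$ does not repair this, and only the last-layer identity survives. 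Second, and more fundamentally, even where Euler's identity holds it only gives $\lvert f\rvert\leq\norm{\bm{W}^{(l)}_t}_{\mr{F}}\,\norm{\nabla_{\bm{W}^{(l)}}f}$, so every gradient lower bound extracted this way carries weight norms (or, after your AM--GM step, products of them) in the denominator. These norms admit no $t$-uniform bound --- under the very hypothesis in play, the Corollary to Theorem~\ref{thm: divergence_of_network} says the parameters diverge --- so the ``constant'' you obtain tends to $0$ and no contradiction follows. The claim that $c_m$ ``never depends on $t$'' is asserted, not proven, and it is precisely the hard part.

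The paper's endgame is more elementary and sidesteps both issues: write $f_t(\x_i)=\bm{W}^{(L+1)}_t\bm{\alpha}^{(L)}_t(\x_i)=\sum_j W^{(L+1)}_{j,t}\alpha^{(L)}_{j,t}(\x_i)$; since this diverges, some coordinate product $W^{(L+1)}_{j_0,t}\,\alpha^{(L)}_{j_0,t}(\x_i)$ is unbounded, hence along a subsequence either $\alpha^{(L)}_{j_0,t}(\x_i)\to\infty$ or $W^{(L+1)}_{j_0,t}\boldsymbol{1}\big(\alpha^{(L)}_{j_0,t}(\x_i)>0\big)\to\infty$. Both quantities are read off directly from gradient blocks: $\nabla_{\bm{W}^{(L+1)}}f=\bm{\alpha}^{(L)}$ handles the first case, and the last-layer \emph{bias} gradient $\nabla_{\bm{b}^{(L)}}f=\sqrt{2/m}\,\bm{W}^{(L+1)}D_{\x}^{(L)}$ handles the second (for the ResNet, the blocks $\bm{W}^{(L+1)}$ and $\bm{d}^{(L)}$ play these roles). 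Ironically, the bias terms you tried to absorb as a nuisance are the resource that makes the proof work: the bias gradient exposes the outgoing weights on active units, with no homogeneity identity and no control of weight norms needed. If you want a quantitative version of your structural inequality, it is available along exactly this line: $\lvert f\rvert\leq\norm{\bm{W}^{(L+1)}D_{\x}^{(L)}}\,\norm{\bm{\alpha}^{(L)}}=\sqrt{m/2}\,\norm{\nabla_{\bm{b}^{(L)}}f}\,\norm{\nabla_{\bm{W}^{(L+1)}}f}\leq\sqrt{m/8}\,K_t(\x_i,\x_i)$, i.e.\ $K_t(\x_i,\x_i)\geq\sqrt{8/m}\,\lvert f_t(\x_i)\rvert$, with a constant that really is independent of $t$ --- but it comes from the bias block, not from per-layer homogeneity.
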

To understand the significance of this result, we compare it with regression problems: 
In regression problems, the distance between the empirical NTK and the NTK is bounded by an infinitesimally small upper bound (with respect to the network width $m$) throughout the entire training process (see \cref{eq: convergence}). This allows the training of neural networks to be well approximated by NTK regression when network widths $m$ is large enough. However, in classification problems, such an infinitesimal upper bound no longer holds. Instead, there exists a positive lower bound that holds for all network widths $m$: although the empirical NTK and NTK remain close in the early stages of training for wide networks \citep{ji2019polylogarithmic, taheri2024generalization}, 
they eventually diverge as training progresses. 
\newpage
This result suggests that the NTK theory, which works well for regression problems, does not directly apply to classification problems, thereby necessitating new approaches to understand and analyze the behavior of classification neural networks


\section{Proof Sketch}\label{Proof Sketch}

We present the sketch of our proofs in this section and defer the complete proof to Appendix. In this section, we denote the network output function by $f_t^{\mr{NN}} :\mc{X} \subset \mb{R}^d \to \mb{R}$, with the corresponding empirical NTK and NTK represented by $K_t^m(\cdot,\cdot)$ and $K_{\mr{NT}}(\cdot,\cdot)$, respectively. Let $\{(\x_i,y_i)\}_{i\in[n]}$ be the fixed set of training samples. 
For simplicity, in this proof sketch, we use the same symbols for different network structures, meaning these symbols refer to both fully connected and residual neural networks. We assume the neural network is trained using the cross-entropy loss function on the training samples.

We can complete the proof of Theorem \ref{thm: divergence_of_ntk} by contradiction. For the sake of contradiction, we first assume that during training, the empirical NTK remains approximately invariant:
\begin{equation*}
 \sup_{\x,\x' \in \mc{X}} \sup_{t \geq 0} \lvert K_t^m(\x,\x') - K_{\mr{NT}}(\x,\x') \rvert = o_m(1).
\end{equation*}
This implies that we also have the following relationship between the minimum eigenvalues:
\begin{equation*}
 \sup_{t \geq 0 } \abs{\widetilde{\lambda}_{0}(t) - \lambda_{\min} \big(K_{\mr{NT}}(\X,\X) \big) } \leq \sup_{t \geq 0} \norm{K_t^m(\X,\X) - K_{\mr{NT}}(\X,\X)}_{\mr{F}} = o_m(1),
\end{equation*}
where $\widetilde{\lambda}_{0}(t)=\lambda_{\min}(K^m_t(\X,\X))$. 
Recall that the strictly positiveness of the NTK has been established in \cref{sec: SPD_of_NTK}, so we know that the empirical NTK is also consistently strictly positive definite during training. Specifically, there exists a positive constant $C$ such that $\widetilde{\lambda}_0(t)\geq C>0$.
Since the empirical NTK involves the derivatives of the network with respect to the parameters, it can be used to analyze the dynamics of the network function during training. As defined in \cref{eq: def_u}, we let $\bm{u} \in \mathbb{R}^n$ represent the output residual, which also reflects the output values of the network function on the training set. We construct the following Lyapunov function
\begin{equation*}
 V(\bm{u}) \coloneqq \textstyle\sum\limits_{i=1}^n \ln (1-u_i(t)), \qand V_t \coloneqq V(\bm{u}(t)).
\end{equation*}
For the scalar $V_t$, which satisfie $V_t\geq 0$, the dynamical equation is given by
\begin{equation*}
 \frac{\dd}{\dd t} V_t = - \bm{u}(t)^\top \big[{\diag(2\bm{y} - 1)^\top K_t^m(\X,\X) \diag (2\bm{y}-1)}\big] \bm{u}(t).
\end{equation*}
Since the empirical NTK is strictly positive definite, with $\widetilde{\lambda}_0(t)\geq C >0$, we know that $V_t$ is monotonically decreasing. Through some technical analysis (details in the appendix), we finally get $V_t \to 0$ as $t \to \infty$. Therefore, as $t \to \infty$, we have $\bm{u}(t)\to \bm{0}$ and $\lvert f_t^{\mr{NN}}(\x_i) \rvert \to \infty$.
Combined with the specific network structure, we can derive that
\begin{equation*}
 \lim_{t \to \infty} \sup_{\x,\x' \in \{\x_i\}_{i=1}^n } \lvert K_t^m(\x,\x') \rvert \to \infty,
\end{equation*}
which indicates that the empirical NTK diverges during training. Since the assumption of uniform convergence of the empirical NTK to the NTK implies that the empirical NTK is bounded, this leads to a contradiction, completing the proof.

\section{Numerical Experiments}\label{sec: Numerical Experiments}
In this section, we present several numerical experiments to illustrate the theoretical results in this paper. For convenience, we only show the experimental results for the fully connected network, as similar results can be obtained for residual networks.

\subsection{Synthetic Data}
We conduct experiments on a synthetic dataset using the fully connected network described earlier, which consists of three hidden layers. For visualization purposes, we restrict the input to the unit circle $\mathbb{S}^1$, where the input is represented as $(\cos\theta, \sin\theta)^\top$, with $\theta$ being the angle between the input vector and the $x$-axis, referred to as the polar angle. Therefore, the input dimension of the network is $d=2$. Our experiments are organized as follows.


\begin{wrapfigure}{r}{0.58\linewidth}
 \centering
 \includegraphics[width=0.52\textwidth]{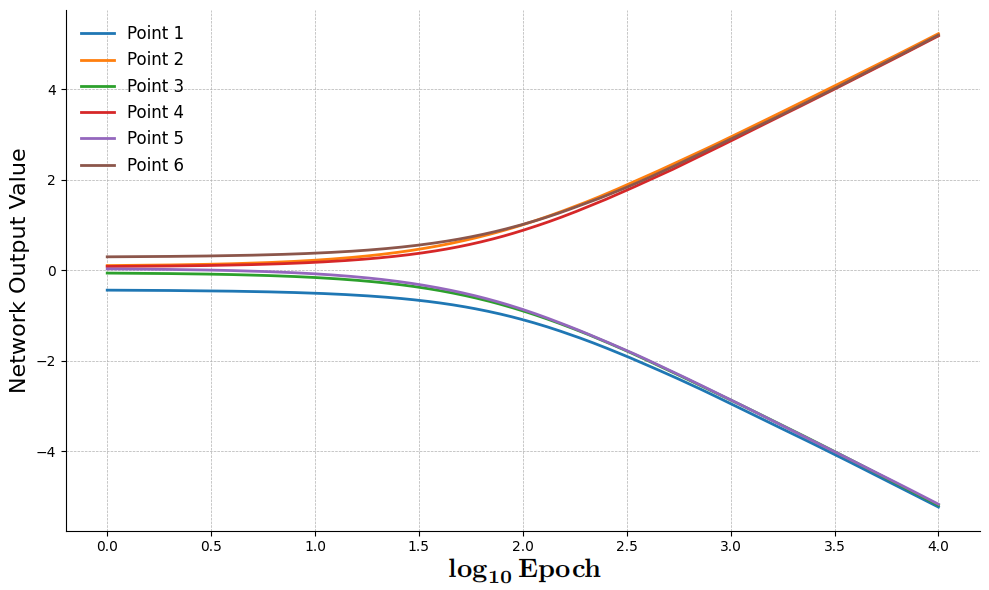} 
 \vspace{-3mm}
 \caption{Divergence of the Fully Connected Network During Training: The plot shows the output values of the network at the six training points throughout the training process. Despite the alternating labels, the network function diverges at all six points as training progresses.}
 \vspace{-3mm}
 \label{fig: divergence_of_fc_network}
\end{wrapfigure}

\paragraph{Divergence of the Fully Connected Network Function During Training} We set the width of the fully connected network to $m = 2000$ and construct a training set consisting of six input vectors uniformly distributed on the unit circle $\mathbb{S}^1$. Specifically, the points $\{\x_i\}_{i \in [6]}$ are given by $(\cos \theta_i, \sin \theta_i)^\top$ with $\theta_i = {i \pi}/{3}$. The corresponding labels are $(0,1,0,1,0,1)$. The alternating pattern of the labels is designed to mitigate any potential effects of data separability. We train the network for 10,000 epochs with a learning rate of $0.1$. We plot the network output values at the six training points during training, and the final result is shown in Figure \ref{fig: divergence_of_fc_network}.


\paragraph{Convergence of the Empirical NTK at Initialization}
We plot the empirical NTK function of the network at initialization for different widths by fixing the first input $\x = (1,0)^\top$ and varying the polar angle $\theta$ of the other input $\x'$ from $-\pi$ to $\pi$. The result is shown in Figure \ref{fig: convergence_of_fc_ntk}, indicating that the empirical NTK converges at $t=0$. The results are consistent with previous findings in NTK theory. 
\begin{figure}[H]
 \centering
 \includegraphics[width=\linewidth]{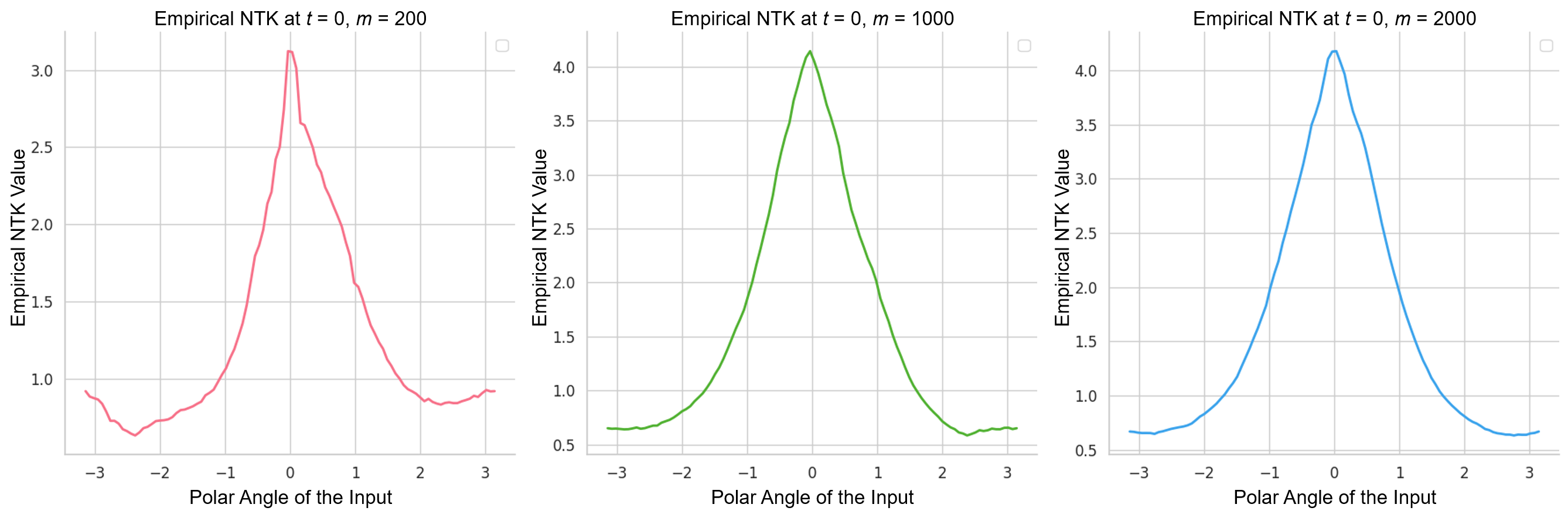}
 \vspace{-5mm}
 \caption{Convergence of the Fully Connected Empirical NTK at Initialization: The empirical NTK functions at initialization for networks with different widths ($m=200$, $m=1000$, and $m=2000$) are shown. The image shows that as $m$ increases, the empirical NTK values become more stable. The results indicate that the empirical NTK converges at $t=0$. }
 \label{fig: convergence_of_fc_ntk}
\end{figure}

\begin{wrapfigure}{r}{0.58\linewidth}
 \centering
 \includegraphics[width=0.48\textwidth]{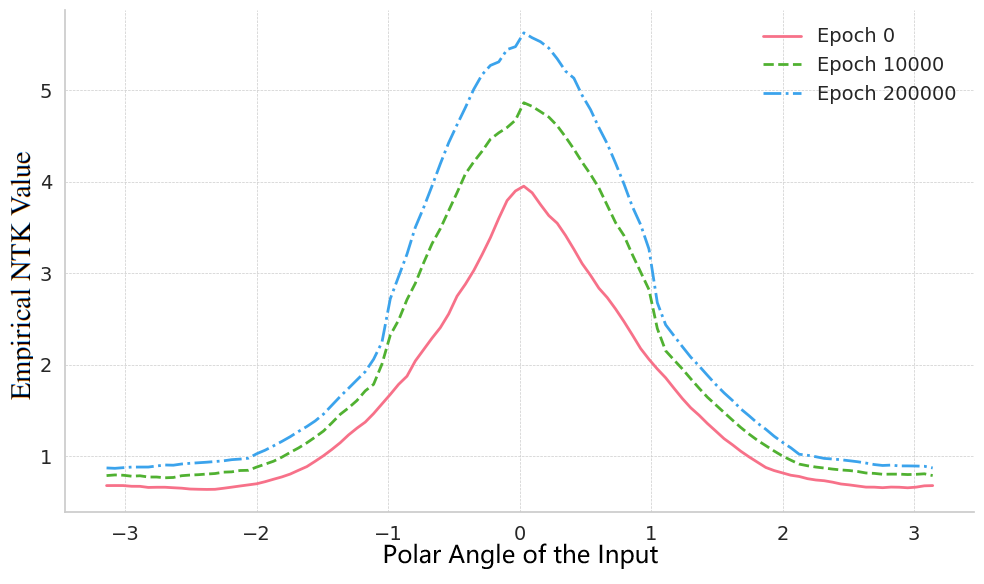} 
 \vspace{-3mm}
 \caption{Divergence of the Fully Connected Empirical NTK During Training: The behavior of the empirical NTK at different epochs during training for a network with width $m=2000$. The plot highlights the divergence of empirical NTK over the course of training.}
 \vspace{-5mm}
 \label{fig: divergence_of_fc_ntk}
\end{wrapfigure}

\paragraph{Divergence of the Empirical NTK During Training}
Next, we examine the behavior of the empirical NTK during training to highlight the impact of the cross-entropy loss function. We train the network with a width of $m = 2000$ using the cross-entropy loss function. As in the previous experiment, we fix the first input $\x = (1,0)^\top$ and vary the polar angle $\theta$ of the other input $\x'$ from $-\pi$ to $\pi$. We then plot the network output at these points. Figure \ref{fig: divergence_of_fc_ntk} presents the evolution of the empirical NTK across different epochs, highlighting its divergence during the training process. This behavior stands in stark contrast to what observed in regression problems.



\subsection{Real Data}
We conduct an experiment on the MNIST dataset, using parity (odd or even) as the criterion for binary classification. We train a four-layer fully connected neural network for this task. The network has a width of $m = 500$, with a learning rate of $\text{lr} = 0.5$, and is trained for $\text{epoch} = 100,000$. Since the dimension of the MNIST dataset is $d = 784$, it is difficult to visualize the empirical NTK values as shown in Figures \ref{fig: convergence_of_fc_ntk} and \ref{fig: divergence_of_fc_ntk}. Therefore, we select three points and show that the empirical NTK values diverge at these points, which is sufficient to confirm our results. The results are presented in Figure \ref{fig: The NTK value on MNIST dataset}. The values from the first 10 epochs are discarded to better observe the training behavior and minimize the impact of initialization on the experimental results.

\begin{figure}[H]
 \centering
 \includegraphics[width=\linewidth]{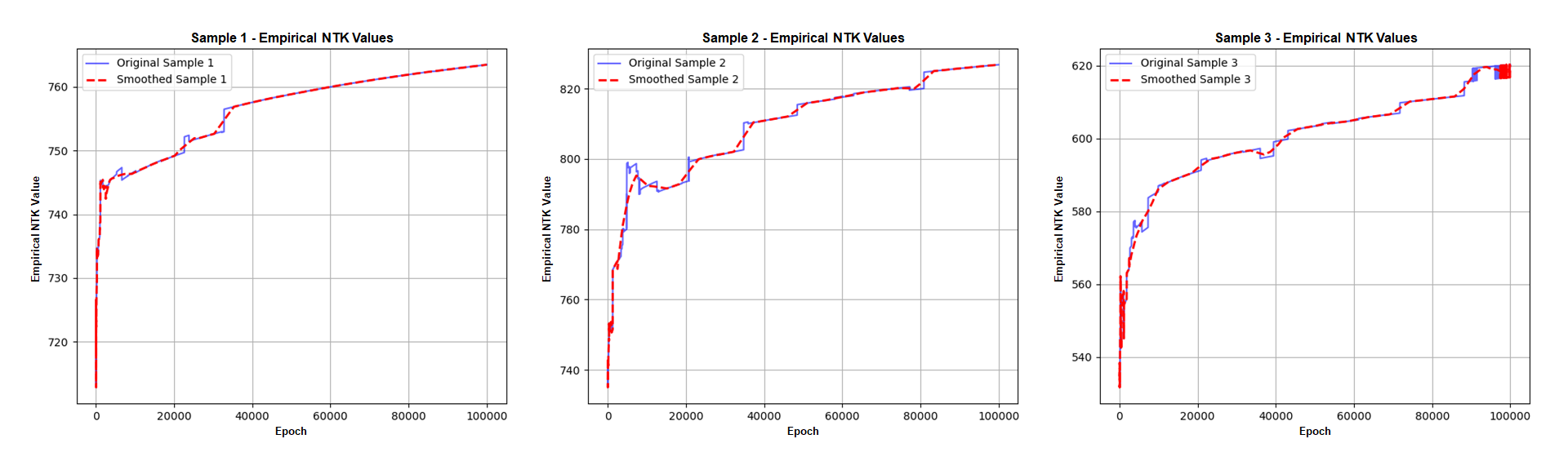}
 \vspace{-7mm}
 \caption{The Empirical NTK Values on the MNIST Dataset for Three Selected Points: The blue lines represent the original empirical NTK values, while the red dashed lines show the smoothed values. The empirical NTK values are computed over $100,000$ epochs for a four-layer fully connected neural network with a width of $m = 500$ and a learning rate of $\text{lr} = 0.5$. These points are selected to illustrate the divergence of the empirical NTK values.}
 \label{fig: The NTK value on MNIST dataset}
\end{figure}

\section{Discussion}\label{sec: Discussion}
\paragraph{Conclusion}
In this paper, we study the behavior of the empirical NTK for fully connected networks and residual networks in classification problems, with a particular focus on their convergence properties when trained using the cross-entropy loss function. Our study reveals that, unlike regression problems where the convergence of the empirical NTK is well documented, the empirical NTK in classification tasks does not uniformly converge to the NTK aross all time. 
This property indicates a limitation of the NTK theory when applied to classification problems and suggests that the NTK does not serve as a fixed approximation of the training dynamics in these scenarios.

Our analysis highlights the need for new theoretical tools and frameworks to better understand and analyze neural networks in the context of classification. We have shown that the standard NTK regime fails in classification problems, which implies that current theories might not fully explain the generalization properties of networks trained on such tasks. This opens up new avenues for research in understanding the complex dynamics of neural networks beyond the NTK framework, especially when dealing with classification problems where cross-entropy loss is prevalent.

\paragraph{Limitation and Future Work}
This paper proves that the empirical NTK does not converge uniformly to the NTK, which is sufficient to compare it with regression problems and highlight the limitations of the NTK theory. However, this qualitative result is relatively preliminary and lacks precision. Based on the experimental results in \cref{sec: Numerical Experiments}, the empirical NTK appears to tend to infinity, but we have not drawn a definitive conclusion (either affirmative or negative; note that the two conclusions in \cref{subsec: Divergence of network} are only part of a proof by contradiction, and their premises have not been positively proven), nor have we provided more refined quantitative results, such as divergence rates. These aspects remain open questions and warrant further exploration in future research.

On the other hand, this paper does not provide a better solution to the limitations of NTK theory. In fact, in recent years, some researchers have pointed out the limitations of NTK theory (including for regression problems) from different perspectives and have made preliminary explorations \citep{li2025diagonaloverparameterizationreproducingkernel}, but progress remains limited. These explorations are challenging yet highly meaningful, and guiding such efforts is one of the goals of this paper.


\clearpage

\bibliography{iclr2025_conference}
\bibliographystyle{iclr2025_conference}

\appendix

\newpage

\section{Further Notations}

We first demonstrate the definition of Gaussian process. 
\begin{definition}
A stochastic process $f(\x)$, $\x \in \mathbb{R}^d$, is called a Gaussian process if for any finite set of points $\x_1, \x_2, \dots, \x_n \in \mathbb{R}^d$, the random vector $(f(\x_1), f(\x_2), \dots, f(\x_n))^\top$ follows a multivariate normal distribution. Specifically, a Gaussian process $f$ is completely specified by its mean function $m(x) = \mathbb{E}[f(x)]$ and covariance function $K(x, x') = \text{Cov}(f(x), f(x'))$, and is denoted as:
$$
f \sim \mathcal{GP}(m(x), K(x, x')).
$$
If $m(x) = 0$, the process is called a centred Gaussian process.
\end{definition}

For the notational simplicity, we denote a centred Gaussian process $f \sim \mathcal{GP}(0, K)$ directly by $f \sim K $.

\section{The strictly positive definiteness of NTK}\label{sec: SPD}

In this section, we demonstrate the proof of Proposition \ref{thm: positive_definite}. In the following two subsections, we respectively prove the strictly positive definiteness of the NTK for fully connected network and residual network. For the sake of writing and referencing convenience, we have divided Proposition \ref{thm: positive_definite} into Proposition \ref{prop: positive_definite_ntk_fc} and Proposition \ref{prop: positive_definite_ntk_res}.

\subsection{Fully connected network}


The proof of the strictly positive definiteness of NTK is roughly similar to that in \cite{jacot2018neural}. However, since our network structure includes several bias terms, which is different from \cite{jacot2018neural}, we provide the whole proof here to ensure completeness.

\begin{proposition}\label{prop: positive_definite_ntk_fc}
 The NTK of fully connected network \ref{eq: fc_network} is strictly positive definite on $\mc{X}$. 
\end{proposition}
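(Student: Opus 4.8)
The plan is to combine three ingredients: elementary closure properties of positive definite kernels under sums and Schur products, an explicit power-series analysis of the dual ReLU activation $\kappa_1$, and the hemisphere geometry induced by the bias augmentation $\x\mapsto(\x,1)$. The overall strategy is to collapse the entire recursion down to a single base case and then prove strict positive definiteness there.

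First I would record the matrix algebra used repeatedly. By the Schur product theorem the entrywise product of positive semi-definite Gram matrices is positive semi-definite, and so are sums. I need two refinements: (i) if $A$ is strictly positive definite and $B$ is positive semi-definite, then $A+B$ is strictly positive definite; and (ii) if $A$ is strictly positive definite and $B$ is positive semi-definite with strictly positive diagonal, then $A\odot B$ is strictly positive definite. Statement (ii) follows from the spectral decomposition $B=\sum_k\lambda_k v_kv_k^\top$, since $A\odot B=\sum_k\lambda_k\,\diag(v_k)A\diag(v_k)$: a vanishing quadratic form $c^\top(A\odot B)c=0$ forces $(v_k)_ic_i=0$ for all $i,k$, and $B_{ii}>0$ then forces $c_i=0$. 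Congruence by an invertible diagonal matrix also preserves strict positive definiteness. With these in hand, consider the recursion $\Theta^{(l+1)}=\Theta^{(l)}\odot\dot\Sigma^{(l+1)}+\Sigma^{(l+1)}$. Each $\dot\Sigma^{(l+1)}$ is a covariance kernel, hence positive semi-definite, with diagonal $\dot\Sigma^{(l+1)}(\x,\x)=2\,\mathbb{E}[\dot\sigma(f(\x))^2]=1>0$, and each $\Sigma^{(l+1)}$ is positive semi-definite. Thus, once $\Theta^{(2)}$ is strictly positive definite, (ii) makes $\Theta^{(l)}\odot\dot\Sigma^{(l+1)}$ strictly positive definite and (i) propagates the property up to $\Theta^{(L+1)}$ for every $L\ge1$. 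Since $\Theta^{(2)}=\Theta^{(1)}\odot\dot\Sigma^{(2)}+\Sigma^{(2)}$ with the first summand positive semi-definite, it suffices to show that the single kernel $\Sigma^{(2)}$ is strictly positive definite.

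For this base case I would use the $1$-homogeneity of ReLU to write $\Sigma^{(2)}(\x,\x')=\sqrt{\Sigma^{(1)}(\x,\x)\,\Sigma^{(1)}(\x',\x')}\,\kappa_1(\rho_{\x\x'})+2$, where $\rho_{\x\x'}=\langle\tilde\x,\tilde\x'\rangle$ and $\tilde\x=(\x,1)/\|(\x,1)\|$ (here $\Sigma^{(1)}(\x,\x')=\langle(\x,1),(\x',1)\rangle$). By diagonal scaling and (i), it is enough that the matrix $(\kappa_1(\langle\tilde\x_i,\tilde\x_j\rangle))$ be strictly positive definite. The decisive geometric fact is that the $\tilde\x_i$ all lie on the open upper hemisphere of $\mb{S}^d$ (last coordinate positive), so they are distinct, no two are antipodal, and the rank-one matrices $\tilde\x_i\tilde\x_i^\top$ are pairwise distinct. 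I would then expand $\kappa_1$: using $\tfrac{d}{du}\big(\sqrt{1-u^2}+u\arcsin u\big)=\arcsin u$, whose Taylor coefficients are all nonnegative, one gets $\pi\kappa_1(u)=1+\tfrac{\pi}{2}u+\sum_{k\ge1}d_ku^{2k}$ with every $d_k>0$. Hence $\kappa_1(u)=\tfrac12 u+g(u^2)$ where $g(s)=\sum_{m\ge0}b_ms^m$ has all $b_m>0$. The linear part $\tfrac12\langle\tilde\x_i,\tilde\x_j\rangle$ is positive semi-definite; the even part equals $g(\langle w_i,w_j\rangle)$ with $w_i=\vect(\tilde\x_i\tilde\x_i^\top)$, and since $g$ has strictly positive coefficients at every power and the $w_i$ are distinct, this is strictly positive definite (the same mechanism that makes the exponential dot-product kernel strictly positive definite: the feature functionals $\sum_ic_ie^{\langle w_i,\cdot\rangle}$ are identically zero only if all $\sum_ic_iw_i^{\otimes m}=0$, forcing $c=0$ by linear independence of $\{e^{\langle w_i,\cdot\rangle}\}$ for distinct $w_i$). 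Combining, $(\kappa_1(\langle\tilde\x_i,\tilde\x_j\rangle))$ is strictly positive definite, so $\Sigma^{(2)}$ is as well.

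I expect the main obstacle to be exactly this base case, and specifically reconciling the coefficient pattern of $\kappa_1$ — positive on the constant, the linear term, and all even powers, but vanishing on odd powers beyond the linear one — with strict positive definiteness. On the full sphere this pattern is genuinely insufficient (antipodal points collapse the even features), so the argument cannot be a black-box invocation of a sphere criterion. The resolution is the hemisphere embedding provided by the bias term, which guarantees the squared features $\tilde\x_i\tilde\x_i^\top$ are distinct and lets the purely even part of $\kappa_1$ carry the strict positivity on its own. The remaining steps — the homogeneity rewriting, the sign of the Taylor coefficients, and the propagation through the recursion — are routine once this point is secured.
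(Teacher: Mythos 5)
Your proposal is correct, and it reaches the result by a genuinely different route than the paper in both of its two stages. The reduction skeleton is the same — collapse the recursion $\Theta^{(l+1)}=\Theta^{(l)}\odot\dot\Sigma^{(l+1)}+\Sigma^{(l+1)}$ to the single base case $\Sigma^{(2)}$, and handle $\Sigma^{(2)}$ through the bias-induced hemisphere embedding $\x\mapsto(\x,1)/\|(\x,1)\|\in\mathbb{S}_+^d$ — but the mechanisms differ. For the propagation, you carry strictness \emph{through the Hadamard product} via the refinement ``strictly positive definite $\odot$ PSD with strictly positive diagonal is strictly positive definite,'' using $\dot\Sigma^{(l+1)}(\x,\x)=1$; the paper instead lets the additive term carry strictness, invoking an auxiliary lemma that $K\mapsto\mathbb{E}_{f\sim K}[\sigma(f(\x))\sigma(f(\x'))]$ preserves strict positive definiteness so that every $\Sigma^{(l)}$, $l\ge 2$, is itself strictly positive definite. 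For the key lemma on the hemisphere, the paper exploits that distinct points of $\mathbb{S}_+^d$ satisfy $|\langle \x_i,\x_j\rangle|<1$ off the diagonal, so the Schur powers $M_n=(\langle\x_i,\x_j\rangle^n)_{ij}$ converge to the identity; picking one large power whose Maclaurin coefficient is positive and treating the rest as PSD gives strict positive definiteness, and this only requires ``nonnegative coefficients with infinitely many positive'' — the parity pattern of $\kappa_1$ is irrelevant. You instead split $\kappa_1$ into its linear part (PSD) and its even part, observe that on the open hemisphere the outer products $w_i=\vect(\tilde\x_i\tilde\x_i^\top)$ are pairwise distinct (no antipodal collapse), and run the exponential/tensor-power linear-independence argument on a series with all strictly positive coefficients. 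Your route costs a little more work in the base case but is more structural: it isolates exactly why the hemisphere (i.e., the bias term) is indispensable — on the full sphere the even features do collapse — and it makes the recursion self-contained, dispensing with the Gaussian-expectation lemma. The paper's route is shorter and its hemisphere lemma is more generic, applying as a black box to any dot-product kernel with the stated coefficient pattern (it is reused for the ResNet case). One cosmetic note: your homogeneity rewriting $\Sigma^{(2)}(\x,\x')=\|(\x,1)\|\,\|(\x',1)\|\,\kappa_1(\langle\tilde\x,\tilde\x'\rangle)+2$ is actually slightly more careful than the paper's own statement of this identity, which drops the additive constant; as you note, the constant is PSD and harmless either way.
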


\begin{proof}[Proof of Proposition \ref{prop: positive_definite_ntk_fc}]
 Recall that the Hadamard product of positive definite matrices is still positive definite, and the sum of positive definite matrices is also still positive definite. 
 By the recursive form of NTK, the result following from Lemma \ref{lem: positive definiteness of ntk}. 
\end{proof}
The following lemma is provided to ensure the recursive proof between layers can proceed.
\begin{lemma}
 If kernel function $K_1 : \mb{R}^d \times \mb{R}^d \to \mb{R}$ is strictly positive definite, then the kernel function $K_2$ defined by $K_2(x,x') \coloneqq \mathbf{E}_{f\sim K_1}[\sigma(f(x))\sigma(f(x'))]$ is also strictly positive definite. 
\end{lemma}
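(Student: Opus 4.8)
The plan is to reduce strict positive definiteness to a statement about the support of a Gaussian vector. Fix distinct points $x_1,\dots,x_n \in \mathbb{R}^d$ and scalars $c_1,\dots,c_n$ not all zero; I must show $\sum_{i,j} c_i c_j K_2(x_i,x_j) > 0$. Since there are only finitely many terms and each $\sigma(f(x_i))$ has finite second moment (it is a Gaussian composed with a Lipschitz map), I may interchange the sum and the expectation, obtaining
\[
\sum_{i,j} c_i c_j K_2(x_i,x_j) = \mathbf{E}_{f \sim K_1}\Bigl[\Bigl(\sum_{i=1}^n c_i \sigma(f(x_i))\Bigr)^2\Bigr] \geq 0 .
\]
Thus $K_2$ is automatically positive semi-definite — equivalently, $(K_2(x_i,x_j))$ is the Gram matrix of the $L^2(\Omega)$ vectors $\sigma(f(x_1)),\dots,\sigma(f(x_n))$. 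The entire problem is to rule out equality, i.e.\ to show the random variable $S \coloneqq \sum_i c_i \sigma(f(x_i))$ is not almost surely zero, or equivalently that these ReLU images are linearly independent in $L^2$.

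First I would record that the random vector $(f(x_1),\dots,f(x_n))$ is centered Gaussian with covariance matrix $\Sigma = (K_1(x_i,x_j))_{1\le i,j\le n}$. Because $K_1$ is strictly positive definite and the $x_i$ are distinct, $\Sigma$ is positive definite, hence invertible; consequently this Gaussian vector is nondegenerate and its density is strictly positive on all of $\mathbb{R}^n$, so every nonempty open set is charged with positive probability. This is the only place the hypothesis on $K_1$ enters, and it is the crux of the reduction: it converts the almost-sure statement about $S$ into a statement about the Lebesgue-a.e.\ behavior of the deterministic piecewise-linear function $h(t_1,\dots,t_n) \coloneqq \sum_i c_i \sigma(t_i)$ on $\mathbb{R}^n$.

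Finally I would exploit the piecewise-linear structure of the ReLU. On the open positive orthant $\{t : t_i > 0 \text{ for all } i\}$ one has $\sigma(t_i) = t_i$, so there $h(t) = \sum_i c_i t_i$ is a nonzero linear functional (as the $c_i$ are not all zero), whose zero set is a hyperplane of measure zero. Hence $\{h \neq 0\}$ contains a nonempty open subset of the positive orthant, which the nondegenerate Gaussian charges with positive probability, giving $\mathbf{P}[S \neq 0] > 0$ and therefore $\mathbf{E}[S^2] > 0$. The main obstacle is exactly this passage from positive semi-definiteness to strict positivity: a priori $S$ could vanish identically, and the argument must rule this out without any smoothness or analyticity input, instead combining the full support of the nondegenerate Gaussian with the fact that ReLU coincides with the identity on the positive orthant — it is this linear-but-one-sided behavior, not any global property of $\sigma$, that forces linear independence of the images.
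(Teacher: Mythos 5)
Your proof is correct. It shares with the paper's argument the same essential reduction: both pass to the Gram representation $\sum_{i,j} c_i c_j K_2(x_i,x_j) = \mathbf{E}\bigl[\bigl(\sum_i c_i \sigma(f(x_i))\bigr)^2\bigr]$, and both use strict positive definiteness of $K_1$ only to conclude that $(f(x_1),\dots,f(x_n))$ is a nondegenerate Gaussian vector, so that an almost-sure vanishing statement becomes a Lebesgue-a.e.\ statement about the deterministic function $t \mapsto \sum_i c_i\sigma(t_i)$. Where you diverge is in the final, decisive step. The paper argues by contradiction and uses a coordinate-slicing (Fubini) argument: fixing the first $n-1$ coordinates, the one-variable function $v_n \mapsto c + u_{0,n}\sigma(v_n)$ is non-constant, hence nonzero on a set of positive one-dimensional measure, and integrating over slices yields positive $n$-dimensional measure for the nonvanishing set. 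You instead restrict to the open positive orthant, where ReLU is the identity, so the function becomes a nonzero linear functional whose zero set is a null hyperplane; the orthant minus that hyperplane is a nonempty open set of positive Gaussian measure. Your route is more streamlined --- it avoids Fubini and the slicing bookkeeping entirely, and gives a direct rather than contradiction-based proof --- but it is genuinely ReLU-specific, exploiting that $\sigma$ agrees with a linear map on an open set. The paper's slicing argument, by contrast, only needs $\sigma$ to be continuous and non-constant, so it generalizes to other activations with no change. Both are complete and rigorous; which is preferable depends on whether one wants the sharper specialization or the more portable lemma.
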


\begin{proof}
 We prove the lemma by contradiction. Suppose that there exists a set of $ \{x_i\}_{i =1 ,\cdots ,n} \subset \mb{R}^d $ such that the Gram kernel matrix satisfies that $ K_2(X,X)$ is not strictly positive definite. Firstly, it is direct to verify that $K_2(X,X)$ is positive definite. For any $u \in \mb{R}^n$, we have 
 \begin{equation}
 u^\top K_2(X,X) u = \mathbf{E}_{f \sim K_1}[ (u^\top \sigma(f(X)))^2 ] \geq 0.
 \end{equation}
 Since we suppose that $K_2(X,X)$ is not strictly positive definite, it means that there exists a vector $u_0 \in \mb{R}^n$ and $ u_0 \neq 0$ such that $ u_0^\top K_2(X,X)u_0 = 0$. Thus we have 
 \begin{equation}
 u_0^\top K_2(X,X) u_0 = \mathbf{E}_{f \sim K_1}[ (u_0^\top \sigma(f(X)))^2 ], 
 \end{equation}
 namely, $u_0^\top\sigma(f(X))=0 $ holds almost surely. However, $f(X)$ is a $n$-dimensional Gaussian variable with strictly positive definite Covariance matrix, and thus is non-degenerate. We define $g(v) = \sum_{i=1}^n u_{0,i} \sigma(g(v_i)$, where $u_{0,i}$ is the $i$-th entry of $u_0$. Without loss of generality, we assume that $ u_{0,n} \neq 0 $. 
 If $u_0^\top\sigma(f(X))=0 $ holds almost surely, we have $g(v) = 0$ almost everywhere with respect to Lebesgue measure. 
 We fix a vector $w \in \mb{R}^{n-1}$, and define the new function by $g_n(v_n) = g( (w^\top,v_n)^\top ) = \sum_{i=1}^{n-1} u_{0,i} g(w_{i}) + u_{0,n} g(v_{n}).$ Since $\sigma$ is non-constant, we have $m_1(A_{w})>0$ where $m_k$ denotes the $k$-dimensional Lebesgue measure and $A = \{v_n | g_n(v_n) \neq 0 \} $. Then $m_n( \{ v | g(v) > 0 \} ) = \int_{\mb{R}} \int_{\mb{R}^{n-1}} \boldsymbol{1}(v_n \in A_w) \dd w \dd v_n >0 $ 
 Therefore, we prove the lemma by contradiction. 
\end{proof}





\begin{lemma}\label{lem: SPD S_+}
 Let $f: [-1,1] \longrightarrow \mb{R}$ be a continuous function with the expansion 
 \begin{equation}
 f(u) = \sum_{n=0}^\infty a_n u^n, \quad u \in [-1,1],
 \end{equation}
 and $k(x,y) = f(\langle x,y \rangle)$ be the dot-product kernel on $\mb{S}^d$. Then if $a_n \geq 0$ for all $n \geq 0$ and there are infinitely many $a_n > 0$, then $k$ is strictly positive definite on $\mathbb{S}_+^d \coloneqq \{ (x_1 ,x_2, \cdots, x_d) \in \mb{S}^d | x_d > 0 \}$. 
\end{lemma}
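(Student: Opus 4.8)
The plan is to first establish positive semi-definiteness on the whole sphere and then upgrade to strict positive definiteness by exploiting the open-hemisphere restriction. First I would note that each monomial kernel $(x,y)\mapsto\langle x,y\rangle^{n}$ is positive semi-definite because it is induced by the tensor feature map $x\mapsto x^{\otimes n}$: for any distinct points $x_1,\ldots,x_N$ and any $c\in\mathbb{R}^N$,
\[
\sum_{i,j} c_i c_j \langle x_i,x_j\rangle^{n}=\Big\|\sum_{i=1}^N c_i\, x_i^{\otimes n}\Big\|^{2}\geq 0 .
\]
Since every $a_n\geq 0$, summing this identity against the series for $f$ shows that $k$ is positive semi-definite on all of $\mb{S}^d$, hence in particular on $\mb{S}_+^d$.

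For strict positive definiteness I would argue by contradiction. Fix distinct points $x_1,\ldots,x_N\in\mb{S}_+^d$ and suppose $\sum_{i,j}c_ic_j k(x_i,x_j)=0$ for some $c\neq 0$. Expanding $k$ and exchanging the finite double sum with the convergent power series gives $\sum_{n\geq 0} a_n\big\|\sum_i c_i x_i^{\otimes n}\big\|^{2}=0$; as each summand is nonnegative, every one must vanish, so $\sum_i c_i x_i^{\otimes n}=\mathbf{0}$ for each $n$ in the infinite set $S\coloneqq\{n:a_n>0\}$. Contracting each of these tensor identities with $z^{\otimes n}$, for an arbitrary vector $z$ in the ambient space, yields $\sum_i c_i\langle x_i,z\rangle^{n}=0$ for every $n\in S$.

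The decisive step, and the one where the hemisphere hypothesis is essential, is selecting $z$. Since each $x_i$ has strictly positive last coordinate, the pole (the unit vector along the last coordinate axis) lies in the nonempty open cone $\{z:\langle x_i,z\rangle>0\text{ for all }i\}$; deleting the finitely many proper hyperplanes $\{z:\langle x_i-x_j,z\rangle=0\}$ (proper because the $x_i$ are distinct) leaves this cone nonempty, so I can pick $z_0$ there with $t_i\coloneqq\langle x_i,z_0\rangle$ \emph{positive and pairwise distinct}. The contracted identity then says the exponential sum $g(s)\coloneqq\sum_i c_i\, t_i^{\,s}=\sum_i c_i e^{s\ln t_i}$ vanishes at every $s\in S$. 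Because a nonzero exponential sum with distinct real frequencies has only finitely many real zeros (a Rolle-type induction on the number of terms), vanishing on the infinite set $S$ forces $g\equiv 0$, whence $c=0$ by linear independence of distinct exponentials---a contradiction.

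I expect the main obstacle to be exactly this last paragraph: one must produce a \emph{single} direction $z_0$ that makes all the inner products $\langle x_i,z_0\rangle$ simultaneously positive and mutually distinct, which is precisely what the open upper hemisphere provides and what would fail on the full sphere due to antipodal points, and then invoke the finiteness of zeros of exponential sums. The positive-semidefinite reduction and the termwise vanishing are routine by comparison.
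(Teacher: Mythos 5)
Your proof is correct, but it takes a genuinely different route from the paper's. The paper argues directly on the Gram matrix: writing $k(X,X)=\sum_{n} a_n M_n$ with $M_n=(\langle x_i,x_j\rangle^n)_{i,j}$, it notes that distinct points in the open hemisphere can never be antipodal, so every off-diagonal entry satisfies $|\langle x_i,x_j\rangle|<1$ and hence $M_n\to I$ as $n\to\infty$; since each $M_n$ is positive semi-definite and infinitely many $a_n$ are positive, one may pick $n_0$ with $a_{n_0}>0$ so large that $M_{n_0}$ is strictly positive definite (e.g.\ strictly diagonally dominant), and then $k(X,X)\succeq a_{n_0}M_{n_0}\succ 0$. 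You replace this matrix-limit step with an algebraic mechanism: termwise vanishing of $a_n\bigl\|\sum_i c_i x_i^{\otimes n}\bigr\|^2$, contraction against a generic direction $z_0$, and the fact that a nonzero exponential sum with distinct real frequencies has finitely many real zeros. Both uses of the hemisphere are sound, but they are not the same hypothesis: the paper's proof only needs that no two sample points are antipodal, so it in fact establishes strict positive definiteness on any antipodal-free subset of $\mb{S}^d$, whereas your proof needs all points to lie in a common open half-space, a strictly stronger geometric condition that happens to hold on $\mb{S}_+^d$. Two minor points: the interchange of the double sum with the series deserves the one-line justification that $\sum_n a_n=f(1)<\infty$ (the series is assumed to converge at $u=1$); and your last step can be done more cheaply by dominance, dividing $g(n)=\sum_i c_i t_i^{\,n}$ by the largest $t_i$ carrying a nonzero coefficient and letting $n\to\infty$ along $S$, which avoids invoking the Rolle-type zero count.
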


\begin{proof}
 Let $x_1,\cdots , x_n \in \mb{S}^d_+$ be different. The kernel matrix is 
 \begin{equation}
 k(X,X) = \sum_{n=0}^\infty a_n M_n, \quad M_n=\left( \langle x_i, x_j \rangle^n \right)_{i,j}.
 \end{equation}
 Since $|\langle x_i , x_j \rangle| < 1$, we have
 \begin{equation}
 M_n \longrightarrow I_n,
 \end{equation}
 which is strictly positive definite. In this way, we finish the proof. 
\end{proof}

\begin{lemma}\label{lemma, M coefficients k0 k1}
The coefficients of Maclaurin expansion
 of $\kappa_0(u),\kappa_1(u)$ are both non-negative and infinitely many terms are positive.
\end{lemma}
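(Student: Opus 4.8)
The plan is to reduce the whole statement to the generalized binomial expansion of $(1-u^2)^{-1/2}$ by exploiting two elementary differentiation identities, which make the positivity completely transparent.

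\textbf{Step 1 (derivative identities).} First I would verify by direct differentiation that
\[
\kappa_0'(u) = \frac{1}{\pi}\frac{1}{\sqrt{1-u^2}}, \qquad \kappa_1'(u) = \kappa_0(u).
\]
The first is immediate from $\frac{d}{du}\arccos u = -(1-u^2)^{-1/2}$. For the second, differentiating $\kappa_1(u)=\frac1\pi\big(u(\pi-\arccos u)+\sqrt{1-u^2}\big)$ produces a term $\frac1\pi\,u(1-u^2)^{-1/2}$ from the product rule on $u(\pi-\arccos u)$ and a term $-\frac1\pi\,u(1-u^2)^{-1/2}$ from $\sqrt{1-u^2}$; these cancel, leaving exactly $\frac1\pi(\pi-\arccos u)=\kappa_0(u)$.

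\textbf{Step 2 (binomial expansion and integration).} Next I would expand $\kappa_0'$ via the generalized binomial theorem, valid on $(-1,1)$:
\[
(1-u^2)^{-1/2} = \sum_{n=0}^\infty \binom{2n}{n}\frac{u^{2n}}{4^n}, \qquad \binom{2n}{n}\frac{1}{4^n} > 0 .
\]
Since $\kappa_0(0)=\tfrac12$, term-by-term integration on $(-1,1)$ then gives
\[
\kappa_0(u) = \frac12 + \sum_{n=0}^\infty \frac{1}{\pi(2n+1)}\binom{2n}{n}\frac{u^{2n+1}}{4^n},
\]
whose Maclaurin coefficients are all non-negative, with infinitely many strictly positive (every odd power). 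Finally, integrating once more using $\kappa_1'=\kappa_0$ together with $\kappa_1(0)=\tfrac1\pi$ shows that $\kappa_1$ likewise has non-negative coefficients — the constant term is $\tfrac1\pi$, and each strictly positive $u^{2n+1}$ coefficient of $\kappa_0$ becomes a strictly positive $u^{2n+2}$ coefficient of $\kappa_1$ — again with infinitely many strictly positive terms.

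\textbf{Expected obstacle.} The argument is essentially routine once the two derivative identities are in hand, so there is no serious obstacle; the only points requiring care are the sign bookkeeping in the cancellation computing $\kappa_1'$, and the justification of term-by-term integration, which is licensed by uniform convergence of the power series on compact subsets of $(-1,1)$. Note this interval suffices for our purposes, since we only need the Maclaurin coefficients, and the hemisphere bound $|\langle x_i,x_j\rangle|<1$ invoked in Lemma \ref{lem: SPD S_+} keeps every kernel argument strictly inside $(-1,1)$.
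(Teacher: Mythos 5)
Your proof is correct and arrives at exactly the same expansions as the paper, namely $\kappa_0(u)=\frac12+\frac1\pi\sum_{n\geq 0}\frac{(2n)!}{4^n(n!)^2(2n+1)}u^{2n+1}$ and $\kappa_1(u)=\frac1\pi+\frac u2+\frac1{2\pi}\sum_{n\geq 0}\frac{(2n)!}{4^n\,n!\,(n+1)!}\frac{u^{2n+2}}{2n+1}$, but it gets there by a genuinely different and somewhat cleaner route. The paper's proof is a direct calculation: it substitutes the Maclaurin series of $\arccos u$ and of $\sqrt{1-u^2}$ into the defining formula of $\kappa_1$ and then combines terms; because the series for $\sqrt{1-u^2}$ has negative coefficients, the non-negativity of the coefficients of $\kappa_1$ only emerges after a cancellation and simplification step (the passage between the two displayed lines in the paper's proof). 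Your argument avoids that bookkeeping entirely: the identities $\kappa_0'(u)=\frac{1}{\pi\sqrt{1-u^2}}$ and $\kappa_1'(u)=\kappa_0(u)$, both of which you verify correctly, reduce the lemma to the generalized binomial series $(1-u^2)^{-1/2}=\sum_{n\geq 0}\binom{2n}{n}4^{-n}u^{2n}$, whose coefficients are manifestly positive, and term-by-term antidifferentiation --- justified, as you note, by locally uniform convergence on $(-1,1)$ --- preserves non-negativity and automatically keeps infinitely many coefficients strictly positive. What each approach buys: the paper's computation exhibits the closed-form coefficients in one shot, at the cost of a sign-cancellation whose positivity must be checked; yours makes the positivity structurally transparent and would extend to any kernel component whose derivative has a manifestly positive expansion. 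Your closing observation that only $|u|<1$ matters is also consistent with how the lemma is consumed in Lemma \ref{lem: SPD S_+}, where $|\langle x_i,x_j\rangle|<1$ for distinct points of $\mathbb{S}^d_+$.
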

\begin{proof}
A direct calculation leads to that 
\begin{align}\label{kappa0Taylor}
\kappa_0(u)=\frac12+\frac1\pi\sum_{n=0}^\infty \frac{(2n)!}{4^n(n!)^2(2n+1)}u^{2n+1},
\end{align}
and 
\begin{align}
\kappa_1(u)&=\frac1\pi\left[u\left(\frac\pi2+\sum_{n=0}^\infty \frac{(2n)!}{4^n(n!)^2(2n+1)}u^{2n+1}\right)+1+\sum_{n=1}^\infty\frac{(-1)^{n-1}(2n)!}{4^n(n!)^2(2n-1)}(-1)^nu^{2n}\right]\nonumber\\
&=\frac1\pi+\frac u 2+\frac1{2\pi}\sum_{n=0}^\infty \frac{(2n)!}{4^{n}(n!)(n+1)!}\frac{u^{2n+2}}{2n+1}.\label{kappa1Taylor}
\end{align}
\end{proof}


\begin{lemma}\label{lem: positive definiteness of ntk}
 The kernel function $\Sigma^{(2)}$ is strictly positive definite on $\mc{X}$. 
\end{lemma}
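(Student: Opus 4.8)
The plan is to reduce $\Sigma^{(2)}$ to a dot-product kernel on the open upper hemisphere and then invoke Lemma \ref{lem: SPD S_+}. The crucial point is that the generic propagation lemma proved above (if $K_1$ is strictly positive definite, then $\E_{f\sim K_1}[\sigma(f)\sigma(f)]$ is too) \emph{cannot} be applied at this first layer: the base kernel $\Sigma^{(1)}(\x,\x') = \langle\x,\x'\rangle+1 = \langle\tbinom{\x}{1},\tbinom{\x'}{1}\rangle$ has a Gram matrix of rank at most $d+1$, so it is merely positive semi-definite and fails to be strictly positive definite as soon as $n>d+1$. Hence the expectation must be computed explicitly.

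First I would evaluate the Gaussian expectation in closed form. Since over any finite set of points $f\sim\mathcal{N}(0,\Sigma^{(1)})$ is centred Gaussian with $\Var f(\x)=\|\x\|^2+1$ and correlation $\rho(\x,\x')=\frac{\langle\x,\x'\rangle+1}{\sqrt{(\|\x\|^2+1)(\|\x'\|^2+1)}}$, the arc-cosine identity for the ReLU gives
\[
\E_{f\sim\mathcal{N}(0,\Sigma^{(1)})}\big[\sigma(f(\x))\sigma(f(\x'))\big]=\tfrac12\sqrt{(\|\x\|^2+1)(\|\x'\|^2+1)}\,\kappa_1\big(\rho(\x,\x')\big).
\]
Writing $\phi(\x)=\tbinom{\x}{1}/\big\|\tbinom{\x}{1}\big\|$, which is precisely the map into $\mathbb{S}^d_+$ used in the residual case, one checks $\rho(\x,\x')=\langle\phi(\x),\phi(\x')\rangle$, so that
\[
\Sigma^{(2)}(\x,\x')=\sqrt{(\|\x\|^2+1)(\|\x'\|^2+1)}\,\kappa_1\big(\langle\phi(\x),\phi(\x')\rangle\big)+2 .
\]

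Next I would fix distinct $\x_1,\dots,\x_n\in\mc{X}$ and examine the Gram matrix. The map $\phi$ is injective (equal last coordinates force the normalizing scalars to agree, hence $\x=\x'$) and lands in the \emph{open} upper hemisphere, so $\phi(\x_1),\dots,\phi(\x_n)$ are distinct points of $\mathbb{S}^d_+$. By Lemma \ref{lemma, M coefficients k0 k1} the Maclaurin coefficients of $\kappa_1$ are non-negative with infinitely many strictly positive, so Lemma \ref{lem: SPD S_+} shows that $M\coloneqq\big(\kappa_1(\langle\phi(\x_i),\phi(\x_j)\rangle)\big)_{ij}$ is strictly positive definite. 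With $D\coloneqq\diag\big(\sqrt{\|\x_i\|^2+1}\big)$, which is invertible with positive diagonal, we obtain
\[
\Sigma^{(2)}(\X,\X)=D M D+2\,\vone\vone^\top .
\]
Since congruence by an invertible matrix preserves strict positive definiteness, $DMD$ is strictly positive definite, while $2\vone\vone^\top$ is positive semi-definite; their sum is therefore strictly positive definite, which is the claim.

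The main obstacle is conceptual rather than computational: recognizing that the rank-deficiency of $\Sigma^{(1)}$ blocks the direct use of the propagation lemma, and that the remedy is to pass through the explicit arc-cosine representation together with the hemisphere embedding $\phi$. The one technical item to verify with care is that $\phi$ is injective and maps into the open hemisphere $\mathbb{S}^d_+$, which is exactly what licenses the application of Lemma \ref{lem: SPD S_+} to the transformed points; the remaining steps (the arc-cosine identity, the congruence step, and adding the rank-one semidefinite term $2\vone\vone^\top$) are routine.
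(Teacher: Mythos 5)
Your proposal is correct and takes essentially the same route as the paper's own proof: embed the inputs into the open upper hemisphere via $\phi(\x)=\tbinom{\x}{1}/\big\|\tbinom{\x}{1}\big\|$, identify the Gaussian expectation with the arc-cosine kernel $\tfrac12\kappa_1$, invoke Lemma~\ref{lemma, M coefficients k0 k1} and Lemma~\ref{lem: SPD S_+}, and remove the norm factors by a diagonal congruence. Your write-up is in fact slightly more careful than the paper's: it records the additive term $2\vone\vone^\top$ that the paper's displayed identity for $\Sigma^{(2)}$ omits, checks the injectivity of $\phi$ explicitly, and makes precise why the generic propagation lemma cannot be applied directly to the rank-deficient $\Sigma^{(1)}$.
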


\begin{proof}
 We do a transformation from $\mb{R}^d$ to $\mb{S}_+^{d}$ to finish the proof. For $x\in \mb{R}^d$, we define $\widetilde{x}\coloneqq (x,1) $, which means that add an entry $1$ to the end of the vector $x$. Define $\phi: \mb{R}^d \to \mb{S}_+^{d} $ by $ \phi(x) = \frac{\widetilde{x}}{ \norm{\widetilde{x}}_2}$. 

 If we define a kernel function $K_1(x,x') = \langle x ,x' \rangle $ on $\mb{S}_+^{d} \times \mb{S}_+^{d}$ and
 a kernel function $ K_2(x,x') = \mathbf{E}_{f \sim K_1} [ \sigma(f(x)) \sigma(f(x')) ] = \frac{1}{2} \kappa_1(\langle x,x' \rangle)$, then $K_2$ is strictly positive definite on $\mb{S}^d_+$ by Lemma \ref{lemma, M coefficients k0 k1} and Lemma \ref{lem: SPD S_+}. Actually, we then have that 
 \begin{equation}
 \Sigma^{(2)}(x,x') = 2\norm{\widetilde{x}}_2 K_2(\phi(x),\phi(x')) \norm{\widetilde{x}}_2. 
 \end{equation}
 Therefore, we can directly verify that $\Sigma^{(2)}$ is strictly positive definite on $\mc{X}$. 
\end{proof}

\subsection{Residual network}

For residual neural networks, since only the first layer in the standard residual network structure includes a bias term, we transform the original kernel into an dot-product kernel on the sphere and then take a Maclaurin expansion to complete the proof.

\begin{proposition}\label{prop: positive_definite_ntk_res}
The NTK of residual network \cref{eq: resnet} is strictly positive definite on $\mc{X}$. 
\end{proposition}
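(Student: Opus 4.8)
The plan is to exploit the factored, dot-product structure of $\RESNTK$ exactly as was done for the FCN, reducing the claim to Lemma \ref{lem: SPD S_+} by exhibiting the bracketed factor as a dot-product kernel on the hemisphere whose Maclaurin coefficients are non-negative, with infinitely many strictly positive. Writing $\phi(\x)=\binom{\x}{1}/\|\binom{\x}{1}\|$ and $u=\langle\tilde{\x},\tilde{\x}'\rangle=\phi(\x)^\top\phi(\x')\in[-1,1]$, every quantity in the formula — the $K_l$, the $B_l$, and $r$ — is a function of $u$ alone, so the whole bracket equals $F(u)$ for a single univariate $F$. Since $\RESNTK(\x,\x')=\|\binom{\x}{1}\|\,\|\binom{\x'}{1}\|\,F(u)$ and congruence by the positive diagonal matrix $\diag(\|\binom{\x_i}{1}\|)$ preserves strict positive definiteness, it suffices to show that $(\x,\x')\mapsto F(\langle\phi(\x),\phi(\x')\rangle)$ is strictly positive definite on $\mathbb{S}_+^d$; here I would note that $\phi$ is injective (the last coordinate being $1$ fixes the scaling), so distinct samples yield distinct points on the open hemisphere.

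Next I would establish that $F(u)=\sum_{n\ge0}c_n u^n$ with all $c_n\ge 0$. The engine is the closure of the class of power series with non-negative coefficients under non-negative linear combinations, Cauchy products, and composition, together with Lemma \ref{lemma, M coefficients k0 k1}, which places $\kappa_0$ and $\kappa_1$ in this class. By forward induction on $l$ from $K_0(u)=u$, each normalized kernel $K_{l-1}/(1+a^2)^{l-1}$ has non-negative coefficients, hence so does $\kappa_1$ composed with it, and therefore so does $K_l$; a parallel backward induction from $B_{L+1}=1$ gives the same for every $B_l$, using that each factor $1+a^2\kappa_0(\cdot)$ has non-negative coefficients and that products preserve the property. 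Feeding these into the explicit expression for $r$ and then into $F(u)=K_L+u\,B_1+a^2 r$ shows $F$ has non-negative Maclaurin coefficients. For strict positivity of infinitely many of them, I would isolate the $l=1$ summand of $a^2 r$, which contributes $a^2 B_2(u)\,\kappa_1(u)$ with $B_2(0)>0$; since $\kappa_1$ already has infinitely many positive coefficients and nothing else in $F$ can cancel them, $F$ inherits infinitely many positive coefficients.

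The point requiring most care — and the step I expect to be the main obstacle — is the legitimacy of these power-series manipulations on the \emph{closed} interval $[-1,1]$, since $\kappa_0,\kappa_1$ carry $\arccos$ and $\sqrt{1-u^2}$ singularities at the endpoints and are only analytic on $(-1,1)$. The arguments of every $\kappa$ remain in $[-1,1]$ because the normalized kernels are genuine Gaussian-process correlations with unit diagonal, so all compositions are valid on $(-1,1)$; and convergence on the closed interval comes for free once non-negativity of the coefficients is known, because $F(1)$ is the finite diagonal value of the NTK, whence $\sum_n c_n=F(1)<\infty$ forces absolute, uniform convergence on $[-1,1]$ and, by Abel's theorem, $F(u)=\sum_n c_n u^n$ there. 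With $F$ realized as a convergent dot-product kernel having non-negative coefficients and infinitely many positive ones, Lemma \ref{lem: SPD S_+} gives strict positive definiteness on $\mathbb{S}_+^d$, and the positive diagonal rescaling transfers this to $\RESNTK$ on $\mathcal{X}$, completing the proof.
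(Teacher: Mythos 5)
Your proof is correct, and it rests on the same two pillars as the paper's own argument: the map $\phi(\x)=\binom{\x}{1}/\|\binom{\x}{1}\|$ onto the upper hemisphere, and the Maclaurin-coefficient criterion of Lemma~\ref{lem: SPD S_+} combined with Lemma~\ref{lemma, M coefficients k0 k1}. The organization, however, is genuinely different. The paper splits the bracketed kernel as $K_L + (\tilde{\x}^\top\tilde{\x}'\,B_1 + a^2 r)$, proves strict positive definiteness only for $K_L$ (Lemmas~\ref{lem: strictly positive definite} and \ref{lem:strictly positive definite basis}, by induction on $l$ from $K_1(u)=u+a^2\kappa_1(u)$), and absorbs the remaining terms via the rule ``strictly positive definite $+$ positive semi-definite is strictly positive definite'' --- leaving the positive semi-definiteness of $u\,B_1(u)+a^2 r(u)$ asserted but unproven, even though it requires exactly the coefficient bookkeeping for $B_l$ and $r$ that you carry out. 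You instead expand the entire bracket as one series $F(u)$, show all coefficients are non-negative by closure under sums, products, and composition, and locate the infinitely many strictly positive coefficients in the $l=1$ summand $a^2B_2(u)\kappa_1(u)$ of $a^2r$ rather than in $K_L$; since $B_2$ has non-negative coefficients with $B_2(0)\geq 1$, these coefficients dominate those of $\kappa_1$ and nothing can cancel them. Your route costs more bookkeeping but buys completeness: besides filling the semi-definiteness gap, you settle two points the paper passes over silently --- the injectivity of $\phi$ (needed so that distinct samples give distinct points of $\mathbb{S}_+^d$ before invoking Lemma~\ref{lem: SPD S_+}), and the validity of the expansion at the endpoint $u=1$ (needed because the normalized kernel matrix has unit diagonal while Lemma~\ref{lem: SPD S_+} presumes an expansion on all of $[-1,1]$). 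Your resolution of both is sound: the arguments fed to $\kappa_0,\kappa_1$ stay in $[-1,1]$ because the normalized $K_l/(1+a^2)^l$ are correlations with unit diagonal, and non-negativity of the coefficients plus finiteness of the diagonal value $F(1)$ gives absolute uniform convergence on $[-1,1]$ by monotone convergence/Abel. As a minor bonus, your argument covers every $L\geq 1$ uniformly, whereas the paper's Lemma~\ref{lem: strictly positive definite} is stated for $L\geq 2$ and the case $L=1$ has to fall back on Lemma~\ref{lem:strictly positive definite basis}.
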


 \begin{proof}
 Note that in Lemma \ref{lem: positive definiteness of ntk}, we accomplish the proof through a transformation from $ \mb{R}^d \to \mb{S}_+^d$. 
 We can also define the NTK of ResNet in a similar mannar. It is well known that the sum of a positive definite kernel and a strictly positive definite kernel is still strictly positive definite. Therefore, to derive the strictly positive definiteness of $\RESNTK$, we only need to consider the strictly positive definiteness of $ K_L(\cdot,\cdot)$ on $\mb{S}^{d}_+$, which is shown in Lemma \ref{lem: strictly positive definite}. 

 \end{proof}
 
\begin{lemma}\label{lem: strictly positive definite}~
$K_L(\cdot,\cdot)$ is strictly positive definite on $\S_+^{d}$ when $L\geq 2$.
\end{lemma}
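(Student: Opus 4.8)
The plan is to exploit that $K_L$ is a \emph{dot-product kernel}: since $\tilde{\x}$ and $\tilde{\x}'$ are unit vectors, the entire recursion defining $K_l$ depends on $\tilde{\x},\tilde{\x}'$ only through the scalar $u \coloneqq \langle \tilde{\x},\tilde{\x}'\rangle$, because the base $K_0 = \tilde{\x}^\top\tilde{\x}'$ equals $u$ and each step only applies $\kappa_1$ and positive scalings. Writing $K_l(\tilde{\x},\tilde{\x}') = g_l(u)$ for a univariate function $g_l$, the goal reduces to showing that $g_L$ admits a Maclaurin expansion $g_L(u) = \sum_{n\ge 0} a_n u^n$ on $[-1,1]$ with $a_n \ge 0$ for all $n$ and infinitely many $a_n > 0$. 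Lemma \ref{lem: SPD S_+} then yields strict positive definiteness on $\mathbb{S}^d_+$ directly.

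The core is an induction on $l$ establishing \emph{simultaneously} that (i) $g_l$ is continuous on $[-1,1]$ with non-negative Maclaurin coefficients, (ii) $g_l(1) = (1+a^2)^l$, and (iii) for $l \ge 1$ infinitely many coefficients of $g_l$ are strictly positive. The base case $g_0(u) = u$ is immediate. For the inductive step, note that (i) and (ii) together give $|g_{l-1}(u)| \le g_{l-1}(1) = (1+a^2)^{l-1}$ for all $u\in[-1,1]$, since a power series with non-negative coefficients (hence absolutely convergent on $[-1,1]$ because it converges at $u=1$) attains its maximum modulus at $u=1$; therefore the argument $g_{l-1}(u)/(1+a^2)^{l-1}$ of $\kappa_1$ lies in $[-1,1]$ and the recursion is well defined. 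Then $g_l = g_{l-1} + a^2(1+a^2)^{l-1}\kappa_1\!\big(g_{l-1}/(1+a^2)^{l-1}\big)$ has non-negative coefficients because $\kappa_1$ has non-negative Maclaurin coefficients (the $\kappa_1$ part of Lemma \ref{lemma, M coefficients k0 k1}), a composition of power series with non-negative coefficients again has non-negative coefficients, and positive scalings together with the sum with $g_{l-1}$ preserve non-negativity. Property (ii) follows from $\kappa_1(1)=1$, giving $g_l(1) = (1+a^2)^{l-1} + a^2(1+a^2)^{l-1} = (1+a^2)^l$. Property (iii) holds because $\kappa_1$ contributes infinitely many positive even-power coefficients that cannot be cancelled, as every added term is non-negative.

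Having secured (i)--(iii) for $l = L$ (with $L \ge 2$ comfortably inside the regime $l \ge 1$), I would apply Lemma \ref{lem: SPD S_+} to $f = g_L$ to conclude that $K_L(\tilde{\x},\tilde{\x}') = g_L(\langle \tilde{\x},\tilde{\x}'\rangle)$ is strictly positive definite on $\mathbb{S}^d_+$. This is the only place the normalized input $\tilde{\x}=\binom{\x}{1}/\|\binom{\x}{1}\|$ is used, so the whole argument lives on the sphere where Lemma \ref{lem: SPD S_+} applies.

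The step I expect to be most delicate is the bookkeeping inside the induction: one must verify the domain condition $g_{l-1}(u)/(1+a^2)^{l-1}\in[-1,1]$ \emph{before} invoking $\kappa_1$, which forces the normalization (ii) to be carried along the induction rather than proved afterwards. The only structural fact needed beyond the cited lemmas is that composition preserves non-negativity of Maclaurin coefficients; I would state and justify this explicitly, since the recursion feeds $g_{l-1}$ (with a positive constant term once $l-1\ge 1$) into $\kappa_1$.
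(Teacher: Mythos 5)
Your proof is correct and takes essentially the same route as the paper: reduce $K_L$ to a dot-product kernel in $u=\langle\tilde{\x},\tilde{\x}'\rangle$, use the non-negativity (and infinitude of strictly positive terms) of the Maclaurin coefficients of $\kappa_1$ from Lemma~\ref{lemma, M coefficients k0 k1}, and conclude strict positive definiteness on $\mathbb{S}_+^d$ via Lemma~\ref{lem: SPD S_+}, inducting over the layer recursion. The only difference is organizational --- the paper applies the spherical lemma at the base case $K_1$ (Lemma~\ref{lem:strictly positive definite basis}) and dismisses the remaining recursion as easy, whereas you carry coefficient non-negativity, the normalization $g_l(1)=(1+a^2)^l$, and the domain condition $|g_{l-1}(u)/(1+a^2)^{l-1}|\leq 1$ through the induction and invoke Lemma~\ref{lem: SPD S_+} once at level $L$, which supplies exactly the bookkeeping (including closure of non-negative power series under composition) that the paper leaves implicit.
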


Now we start to prove Lemma \ref{lem: strictly positive definite}. We first prove that
\begin{lemma}\label{lem:strictly positive definite basis}
$K_1(\cdot,\cdot)$ is strictly positive definite on $\S_+^{d}$.
\end{lemma}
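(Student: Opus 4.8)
The plan is to recognize $K_1$ as a dot-product kernel on the sphere and then invoke the two lemmas already in place. First I would unfold the defining recursion at $l=1$: since $K_0(\tilde{\bm{x}},\tilde{\bm{x}}')=\tilde{\bm{x}}^\top\tilde{\bm{x}}'=\langle\tilde{\bm{x}},\tilde{\bm{x}}'\rangle$ and $(1+a^2)^{l-1}=1$ when $l=1$, the recursion collapses to
\[
K_1(\tilde{\bm{x}},\tilde{\bm{x}}')=\langle\tilde{\bm{x}},\tilde{\bm{x}}'\rangle+a^2\,\kappa_1\!\big(\langle\tilde{\bm{x}},\tilde{\bm{x}}'\rangle\big).
\]
Hence $K_1$ is of the form $f(\langle\tilde{\bm{x}},\tilde{\bm{x}}'\rangle)$ with $f(u)=u+a^2\kappa_1(u)$, i.e.\ a dot-product kernel on $\S_+^d$ assembled from two transparent pieces.

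Next I would split $K_1$ into these two pieces and treat each. The term $\langle\tilde{\bm{x}},\tilde{\bm{x}}'\rangle$ is the Gram kernel of the points, hence positive semi-definite. For the second term, Lemma \ref{lemma, M coefficients k0 k1} guarantees that the Maclaurin coefficients of $\kappa_1$ are all non-negative and that infinitely many of them are strictly positive; combined with Lemma \ref{lem: SPD S_+}, this shows $\kappa_1(\langle\cdot,\cdot\rangle)$ is strictly positive definite on $\S_+^d$. Since the scaling factor satisfies $a>0$, multiplying by $a^2$ preserves strict positive definiteness. Finally, because the sum of a positive semi-definite kernel and a strictly positive definite kernel is strictly positive definite, I conclude that $K_1$ is strictly positive definite on $\S_+^d$. (Equivalently, one could directly verify that $f(u)=u+a^2\kappa_1(u)$ itself has non-negative Maclaurin coefficients with infinitely many positive and apply Lemma \ref{lem: SPD S_+} in one shot, since the linear term only raises the already non-negative coefficient of $u$.)

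I do not anticipate a genuine obstacle here; this lemma is essentially the base case for the induction that will establish Lemma \ref{lem: strictly positive definite} for all $L\geq 2$. The only two points that require care are (i) confirming the exact unfolded form of $K_1$ above from the recursion for $K_l$, and (ii) noting the reason the statement is restricted to the open upper hemisphere $\S_+^d$: for distinct points $\tilde{\bm{x}}_i,\tilde{\bm{x}}_j\in\S_+^d$ one never has antipodal pairs (their last coordinates are both positive), so $\langle\tilde{\bm{x}}_i,\tilde{\bm{x}}_j\rangle\in(-1,1)$ strictly. This strict inequality $|\langle\tilde{\bm{x}}_i,\tilde{\bm{x}}_j\rangle|<1$ is exactly the hypothesis that makes $M_n\to I_n$ in the proof of Lemma \ref{lem: SPD S_+}, so the restriction to $\S_+^d$ is what licenses that lemma's application.
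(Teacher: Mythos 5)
Your proof is correct and follows essentially the same route as the paper: both unfold the recursion to get $K_1(\tilde{\bm{x}},\tilde{\bm{x}}')=\langle\tilde{\bm{x}},\tilde{\bm{x}}'\rangle+a^2\kappa_1(\langle\tilde{\bm{x}},\tilde{\bm{x}}'\rangle)$ and then invoke Lemma \ref{lemma, M coefficients k0 k1} together with Lemma \ref{lem: SPD S_+}, with your PSD-plus-SPD splitting being only a cosmetic variant of the paper's one-shot application. Your closing remark about why the restriction to $\S_+^d$ (ruling out $|\langle\tilde{\bm{x}}_i,\tilde{\bm{x}}_j\rangle|=1$) is what licenses Lemma \ref{lem: SPD S_+} is a useful clarification the paper leaves implicit.
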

\begin{proof}

We have known that $K_1$ is a dot-product kernel. If we denote  by $u = x^Tx'$. Then we have 
\begin{equation}
    K_1(x,x') = u + \alpha^2 \kappa_1(u).
\end{equation}
Thus $K_1(\cdot,\cdot)$ is strictly positive definite on $\mb{S}_+^d$ through Lemma \ref{lem: SPD S_+} and Lemma \ref{lemma, M coefficients k0 k1}.
\end{proof}
Then one can easily prove Lemma \ref{lem: strictly positive definite} recursively based on the result of Lemma \ref{lem:strictly positive definite basis}.

\section{Divergence of the Empirical NTK Under Infinite Time}

\subsection{The Dynamic of Output Residual}

Before the characteristic of the empirical NTK, we first demonstrate the dynamic property of the auxiliary variable defined in \cref{eq: def_u}, which is pivotal in our analysis. 
For $\bm{u} = (u_1,u_2,\cdots,u_n)^\top \in \mb{R}^n$, we have the dynamic equation: 
\begin{align*}
 \frac{\dd}{\dd t} u_i(t) &= \frac{-\me^{(2y_i -1) f_t(\x_i)}}{\left[1+\me^{(2y_i-1)f_t(\x_i)}\right]^2} \cdot (2y_i -1) \frac{\dd}{\dd t} f_t(\x_i) \\&= -u_i(1-u_i)\sum_{j=1}^n(2y_i-1) K_t(\x_i,\x_j)(2y_j-1)u_j.
\end{align*}
If we define the matrix
\begin{align*}
 \bm{K}^r :=&~ \big((2y_i - 1)K_t(\x_i,\x_j)(2y_j-1)\big)_{j\in [n]}^{i \in [n]} 
 \\  =&~ \diag(2\y-1)^\top K_t(\X,\X)\,\diag(2\y-1)
 \\  =&~ \mpt{\bm{k}_1^r, \bm{k}_2^r,\cdots, \bm{k}_n^r}=\mpt{\bm{k}_1^r, \bm{k}_2^r,\cdots, \bm{k}_n^r}^\top,
 \end{align*}
then we can derive a more explicit dynamic equation
\begin{equation}\label{eq: dynamic_of_u}
 \frac{\dd}{\dd t}u_i(t) =-u_i(1-u_i) \bm{k}^{r,\top}_i \bm{u}. 
\end{equation}
It is worth noting that $\bm{K}^r$ is also a symmetric matrix as $K_t(\X,\X)$.

\subsection{The Explicit Formula of the Empirical NTK}

 In this subsection, we introduce the explicit formula of the empirical NTK of the fully connected network.

We denote by $ D_{\x}^{(l)} = \boldsymbol{1}( W^{(l)}\alpha^{(l-1)}(x) + b^{(l)} >0)$, such that we have 
\begin{equation}
 \alpha^{(l)}(x) = \sqrt{\frac{2}{m_l}} D_x^{(l)} (W^{(l)} \alpha^{(l-1)}(x) + b^{(l)} ).
\end{equation}
Then the explicit formula of the empirical NTK of multi-layer fully connected network is
\begin{equation}\label{eq: formula_of_NTK}
\begin{aligned}
 K_t(x,x') &= \sum_{l=1}^{L+1} \left\langle \nabla_{W^{(l)}} f(x) , \nabla_{W^{(l)}} f(x) \right\rangle + \sum_{l=1}^{L} \left\langle \nabla_{b^{(l)}} f(x) , \nabla_{b^{(l)}} f(x) \right\rangle 
 \end{aligned}
\end{equation}
where the explicit formula is
\begin{equation}
\begin{aligned}
 & \sum_{l=1}^{L+1} \left\langle \nabla_{\bm{W}^{(l)}} f(x) , \nabla_{\bm{W}^{(l)}} f(x) \right\rangle \\&= \sum_{l=1}^{L+1} 
 \left\langle \left[ \prod_{r={l+1}}^{L+1} \sqrt{\frac{2}{m_{r-1}}} W^{(r)}D_x^{(r-1)} \right] \alpha^{(l-1)}(x), \left[ \prod_{r={l+1}}^{L+1} \sqrt{\frac{2}{m_{r-1}}} W^{(r)}D_{x'}^{(r-1)} \right] \alpha^{(l-1)}(x') \right\rangle 
 \\& = \sum_{l=1}^{L} \left\langle \left[ \prod_{r={l+1}}^{L+1} \sqrt{\frac{2}{m_{r-1}}} W^{(r)}D_x^{(r-1)} \right] \alpha^{(l-1)}(x), \left[ \prod_{r={l+1}}^{L+1} \sqrt{\frac{2}{m_{r-1}}} W^{(r)}D_{x'}^{(r-1)} \right] \alpha^{(l-1)}(x') \right\rangle
 \\& \quad + \left\langle \alpha^{(L)}(x),\alpha^{(L)}(x') \right\rangle,
 \end{aligned}
\end{equation}
and
\begin{equation}
 \sum_{l=1}^{L} \left\langle \nabla_{b^{(l)}} f(x) , \nabla_{b^{(l)}} f(x) \right\rangle = \sum_{l=1}^L \left\langle \left[ \prod_{r={l+1}}^{L+1} \sqrt{\frac{2}{m_{r-1}}} W^{(r)}D_x^{(r-1)} \right] , \left[ \prod_{r={l+1}}^{L+1} \sqrt{\frac{2}{m_{r-1}}} W^{(r)}D_x^{(r-1)} \right] \right\rangle 
\end{equation}

\subsection{Divergence of the Empirical NTK under Infinite Time}

We first introduce new notations. By Proposition \ref{thm: positive_definite} , we know the NTK is positive definite and thus the kernel matrix $K(X,X)$ is positive definite for any samples $X$. 
Denote $\lambda_0 \coloneqq \lambda_{\min}(K(X,X)) > 0$ to be the minimal eigenvalue of NTK Gram matrix. Also, Let $\widetilde{\lambda}_0(t) \coloneqq \lambda_{\min}(K_t(X,X))$.


\begin{proof}[Proof of Theorem \ref{thm: divergence_of_ntk} \newline]
\textbf{Part 1: Fully Connected Network.}
 We proof the theorem by contradiction. We first assume that there is a kind of parameter initialization such that 
 \begin{equation}\label{eq: unif_conv_assumption_fc}
 \sup_{t\geq 0} \left \lvert K_t(x_i,x_j) - \FCNTK(x_i,x_j) \right \rvert \leq \frac{\lambda_0}{2n^2},
 \end{equation}
 holds for any $x_i ,x_j \in \mc{X}$. Proposition \ref{thm: positive_definite} ensures that $\lambda_0 >0$. Through Lemma \ref{lem: convergence_of_min_lambda} we have 
 \begin{equation}
 \inf_{t\geq 0} \widetilde{\lambda}_0(t) \geq \frac{\lambda_0}{2}>0.
 \end{equation}
 By Lemma \ref{lem: conv_of_function}, we have $(2y_i-1)f_t(x_i) \to +\infty$ as $t \to \infty$. 

 Recall that $f_t(x_i) = W^{(L+1)}_t \alpha^{(L)}_t(x_i) = \sum_{j=1}^{m_L} W^{(L+1)}_{j,t} \alpha^{(L)}_{j,t}(x_i) $, where $\alpha_{j,t}^{(L)}(x_i)$ and $W_{j,t}^{(L+1)}(x_i)$ denotes the $j$-th entry of $ \alpha_{t}^{(L)}(x_i) $ and $ W_{t}^{(L+1)}(x_i)$. 
 Therefore, there exists an index $j_0 \in [m_L]$, such that the limit superior of $W^{(L+1)}_{j_0,t} \alpha^{(L)}_{j_0,t}(x_i ) $ is infinity, as time $t$ comes to infinity. 

 We first consider the case that the limit superior of $\alpha_{j_0,t}^{(L)}(x_i)$ is infinity. 
 In \eqref{eq: formula_of_NTK}, directly we can see, 
 \begin{equation}
 \begin{aligned}
 K_t(x_i,x_i) & \geq \sum_{l=1}^{L+1} \langle \nabla_{W^{(l)}} f(x) , \nabla_{W^{(l)}} f(x) \rangle \geq \langle \alpha^{(L)}(x), \alpha^{(L)}(x) \rangle
 \\& = \sum_{j=1}^{m_L} \alpha^{(L)}_{j,t}(x_i) \alpha^{(L)}_{j,t}(x_i) \geq \alpha^{(L)}_{j_0,t}(x_i) \alpha^{(L)}_{j_0,t}(x_i) \to \infty.
 \end{aligned}
 \end{equation}
 It yields the result that $K_t(x_i, x_i)$ diverges to infinity, which contradicts to the assumption in \eqref{eq: unif_conv_assumption_fc}. 

 We then consider the other case that the limit superior of $W_{j_0,t}^{(L+1)}(x_i) \boldsymbol{1} (\alpha_{j_0,t}^{(L)}>0)$ is infinity. 
 To work on this, we focus on the bias term in the $L$-th layer, i.e., $b^{(L)}$. 
 \begin{equation}
 \begin{aligned}
 K_t(x,x) &\geq \sum_{l=1}^L \langle \nabla_{b^{(l)}} f(x) , \nabla_{b^{(l)}} f(x) \rangle \geq \langle \nabla_{b^{(L)}} f(x) , \nabla_{b^{(L)}} f(x) \rangle
 \\& = \left\langle \left[ \sqrt{\frac{2}{m_{L}}} W^{(L+1)}D_x^{(L)} \right] , \left[ \sqrt{\frac{2}{m_{L}}} W^{(L+1)}D_x^{(L)} \right] \right\rangle 
 \\&\geq \frac{2}{m_L} \left( W_{j_0,t}^{(L+1)}(x_i) \boldsymbol{1} (\alpha_{j_0,t}^{(L)}>0) \right)^2 \to \infty. 
 \end{aligned}
 \end{equation}
It also contradicts to the assumption \eqref{eq: unif_conv_assumption_fc}. Therefore, we finish the proof. 
 


 \textbf{Part 2: Residual Network.}
 The proof is also accomplished through contradiction. We conduct it in a similar mannar as in the case of fully connected network. We first assume that there is a kind of parameter initialization such that 
 \begin{equation}\label{eq: unif_conv_assumption}
 \sup_{t\geq 0} \left \lvert K_t(x_i,x_j) - \RESNTK(x_i,x_j) \right \rvert \leq \frac{\lambda_0}{2n^2},
 \end{equation}
 holds for any $x_i ,x_j \in \mc{X}$. Proposition \ref{thm: positive_definite} ensures that $\lambda_0 >0$. Through Lemma \ref{lem: convergence_of_min_lambda} we have 
 \begin{equation}
 \inf_{t\geq 0} \widetilde{\lambda}_0(t) \geq \frac{\lambda_0}{2}>0.
 \end{equation}
 By Lemma \ref{lem: conv_of_function}, we have $(2y_i-1)f_t(x_i) \to +\infty$ as $t \to \infty$. 
 Recall the structure in \eqref{eq: resnet} of residual network, the network output function $f(x;\theta)$ can be actually decomposed as 
    \begin{equation}
    \begin{aligned}
        f(x;\theta) &= W^{(L+1)}\alpha^{(L)}(x);\\
        \alpha^{(L)}(x)& = \sqrt{\frac{1}{m_0}} (Ax + b) + \sum_{l=1}^L \sqrt{\frac{1}{m}} \alpha \left( V^{(l)} \sigma\left( \sqrt{\frac{2}{m}} W^{(l)} \alpha^{(l-1)}(x) \right) + d^{(l)}\right).
         \end{aligned}
    \end{equation}
 Since for any $x_i$ in training set, we have either $ \norm{W^{(L+1)}_t}_2 \to \infty$ or $ \norm{ \alpha^{(L)}_t(x_i)}_2 \to \infty$ in the sense of the limit superior. Namely, we have either 
 \begin{equation}
 \limsup_{t \to \infty} \norm{W_t^{(L+1)}}_2 = \infty, \quad \text{or} \quad \limsup_{t \to \infty} \norm{\alpha_t^{(L)}(x_i)}_2 = \infty. 
 \end{equation}

 If the former case holds, we have 
 \begin{equation}
 \left \langle \frac{\partial f_t(x_i)}{\partial d^{(L)}} , \frac{\partial f_t(x_i)}{\partial d^{(L)}} \right \rangle = \frac{1}{m} \norm{ W^{(L+1)}_t }_2^2 \to \infty, 
 \end{equation}
in the sense of the limit superior. And thus the empirical NTK will not converges to the NTK.

 If the latter case holds, we also have 
 \begin{equation}
 \left \langle \frac{\partial f_t(x_i)}{\partial W^{(L+1)}} , \frac{\partial f_t(x_i)}{\partial W^{(L+1)}} \right \rangle = \norm{ \alpha^{(L)}_t }_2^2 \to \infty, 
 \end{equation}
in the sense of the limit superior.

Therefore, the theorem is proved through contradiction. 
 
\end{proof}

\begin{lemma}\label{lem: convergence_of_min_lambda}
 Consider fully connected network \ref{eq: fc_network} and residual network \ref{eq: resnet}. Suppose the empirical NTK uniformly converges to the NTK over the samples, as the network width $m$ comes to infinity, namely,
 \begin{equation}
 \sup_{t\geq 0} \sup_{i,j\in[n]} |K_t(x_i,x_j) - K(x_i,x_j)| \leq \frac{\lambda_0}{2n^2},
 \end{equation}
 then we can postulate a lower bound of $\widetilde{\lambda}_0 $:
 \begin{equation}
 \inf_{t \geq 0} \widetilde{\lambda}_0(t) \geq \frac{\lambda_0}{2}.
 \end{equation}
 
\end{lemma}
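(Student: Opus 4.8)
The plan is to treat this as a standard eigenvalue perturbation estimate: the hypothesis controls the entrywise difference between the two Gram matrices uniformly in $t$, and Weyl's inequality converts this into a lower bound on the smallest eigenvalue. First I would fix an arbitrary $t \geq 0$ and introduce the symmetric perturbation matrix $\Delta_t \coloneqq K_t(\X,\X) - K(\X,\X) \in \mb{R}^{n\times n}$. Since both $K_t(\X,\X)$ and $K(\X,\X)$ are symmetric, so is $\Delta_t$, and Weyl's inequality for the smallest eigenvalue gives
\begin{equation*}
\lambda_{\min}\big(K_t(\X,\X)\big) \geq \lambda_{\min}\big(K(\X,\X)\big) - \norm{\Delta_t}_{\mr{op}},
\end{equation*}
where $\norm{\cdot}_{\mr{op}}$ denotes the spectral (operator) norm.

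The next step is to bound $\norm{\Delta_t}_{\mr{op}}$ using only the entrywise information supplied by the hypothesis. Passing through the Frobenius norm, the estimate I aim for is
\begin{equation*}
\norm{\Delta_t}_{\mr{op}} \leq \norm{\Delta_t}_{\mr{F}} = \Big(\textstyle\sum_{i,j}[\Delta_t]_{ij}^2\Big)^{1/2} \leq n\max_{i,j}\big|[\Delta_t]_{ij}\big| \leq n\cdot\frac{\lambda_0}{2n^2} = \frac{\lambda_0}{2n} \leq \frac{\lambda_0}{2},
\end{equation*}
where the third inequality invokes the assumed bound $|K_t(x_i,x_j) - K(x_i,x_j)| \leq \lambda_0/(2n^2)$ and the last uses $n \geq 1$.

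Combining the two displays, for every fixed $t \geq 0$ I obtain $\widetilde{\lambda}_0(t) = \lambda_{\min}(K_t(\X,\X)) \geq \lambda_0 - \lambda_0/2 = \lambda_0/2$, where $\lambda_{\min}(K(\X,\X)) = \lambda_0 > 0$ is guaranteed by Proposition \ref{thm: positive_definite}. Because the entrywise bound in the hypothesis holds uniformly over $t \geq 0$, this lower bound is valid for every $t$, so taking the infimum over $t \geq 0$ yields the claim.

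There is no serious obstacle here; the argument is a routine combination of Weyl's inequality with a crude norm comparison. The only point deserving attention is that the perturbation estimate must hold uniformly in $t$ before the infimum is taken, but this is already encoded in the supremum form of the hypothesis, so no extra work is required. I also note that the Frobenius step could be replaced by the even cruder bound $\norm{\Delta_t}_{\mr{op}} \leq \sum_{i,j}|[\Delta_t]_{ij}| \leq n^2\cdot\lambda_0/(2n^2) = \lambda_0/2$, which explains directly why the factor $n^2$ is chosen in the hypothesis.
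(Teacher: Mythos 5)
Your proof is correct and follows essentially the same route as the paper's: an eigenvalue perturbation bound (Weyl), relaxed through the operator and Frobenius norms, and finished with the entrywise hypothesis. The only cosmetic difference is that you bound $\norm{\Delta_t}_{\mr{F}} \leq n\max_{i,j}|[\Delta_t]_{ij}| = \lambda_0/(2n)$ while the paper uses the cruder $\norm{\Delta_t}_{\mr{F}} \leq \sum_{i,j}|[\Delta_t]_{ij}| \leq \lambda_0/2$; both suffice.
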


\begin{proof}
 \begin{equation}
 \begin{aligned}
 |\widetilde{\lambda}_0(t) - \lambda_0| &\leq \norm{ K_t(X,X) - K(X,X) }_2 \leq \norm{ K_t(X,X) - K(X,X) }_F \\&\leq \sum_{i=1}^n \sum_{j=1}^n |K_t(x_i,x_j) - K(x_i,x_j)|\leq \frac{\lambda_0}{2}.
 \end{aligned}
 \end{equation}
 Thus the lemma is proved.
\end{proof}

\begin{lemma}\label{lem: conv_of_function}
 Consider fully connected network \ref{eq: fc_network} and residual network \ref{eq: resnet}. If we have a positive constant lower bound $C$ of $\widetilde{\lambda}_0(t)$ during training, then the network function will comes to infinity at the sample points $\{x_i\}_{i\in [n]}$, i.e.,
 \begin{equation}
 \lim_{t \to +\infty}(2y_i - 1)f_t(x_i) = + \infty.
 \end{equation}
\end{lemma}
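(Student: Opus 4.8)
The plan is to restate the claim in terms of the output residual and then control it with a Lyapunov argument. First I would record the elementary equivalence: since $u_i = 1/(1+\me^{(2y_i-1)f_t(\x_i)})$, we have $(2y_i-1)f_t(\x_i)\to+\infty$ if and only if $u_i(t)\to 0$, so it suffices to prove that every coordinate of the residual vector $\bm u(t)$ tends to zero. Throughout, the trajectory stays in the open cube $(0,1)^n$: each coordinate obeys $\dot u_i=-u_i(1-u_i)\,\bm k_i^{r,\top}\bm u$ from \cref{eq: dynamic_of_u}, whose scalar prefactor vanishes at $u_i\in\{0,1\}$, so uniqueness of solutions keeps $u_i(t)\in(0,1)$ and boundedness of $\bm u$ gives global existence.

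Next I would introduce the Lyapunov function $V_t := -\sum_{i=1}^n \ln(1-u_i(t))\ge 0$, which is finite at $t=0$ because the initialization is finite (so $u_i(0)\in(0,1)$). Differentiating along \cref{eq: dynamic_of_u} and cancelling the factor $(1-u_i)$ gives $\dot V_t = -\sum_i u_i\,\bm k_i^{r,\top}\bm u = -\bm u^\top\bm K^r\bm u$, where $\bm K^r=\diag(2\y-1)^\top K_t(\X,\X)\diag(2\y-1)$. Because $\diag(2\y-1)$ has diagonal entries in $\{\pm 1\}$ it is orthogonal (indeed involutory), so $\bm K^r$ is orthogonally similar to $K_t(\X,\X)$ and shares its spectrum; in particular $\lambda_{\min}(\bm K^r)=\widetilde\lambda_0(t)\ge C$. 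Therefore $\dot V_t\le -C\norm{\bm u}^2\le 0$, so $V_t$ is nonincreasing and $V_t\le V_0$ for all $t$. This last inequality is the key uniform estimate: from $-\ln(1-u_i)\le V_t\le V_0$ we obtain $u_i(t)\le 1-\me^{-V_0}=:\bar u<1$ for every $i$ and every $t$.

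The hard part is upgrading monotone decrease (equivalently $\int_0^\infty\norm{\bm u}^2\,\dd t\le V_0/C<\infty$) into genuine convergence $V_t\to0$, since integrability alone does not force $\bm u(t)\to 0$. I would close this gap with the uniform bound $\bar u$. On $[0,\bar u]$ the convex map $x\mapsto -\ln(1-x)$ lies below its chord through the origin, so $-\ln(1-u_i)\le \frac{V_0}{\bar u}u_i$ (using $-\ln(1-\bar u)=V_0$), whence $V_t\le \frac{V_0}{\bar u}\sum_i u_i\le \frac{V_0\sqrt n}{\bar u}\norm{\bm u}$ by Cauchy--Schwarz. Squaring and substituting into $\dot V_t\le-C\norm{\bm u}^2$ yields the Riccati-type inequality $\dot V_t\le -c\,V_t^2$ with $c=C\bar u^2/(nV_0^2)>0$.

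Finally I would integrate this inequality: $\frac{\dd}{\dd t}(1/V_t)\ge c$ gives $V_t\le V_0/(1+cV_0 t)\to 0$. Since every summand of $V_t$ is nonnegative, $-\ln(1-u_i)\le V_t\to 0$, so $u_i(t)\to 0$ for each $i$, and by the equivalence of the first step $(2y_i-1)f_t(\x_i)\to+\infty$, which is the assertion of the lemma. I expect the only genuine obstacle to be the third-paragraph passage from a nonincreasing $V_t$ to its decay to zero; the spectral identity $\lambda_{\min}(\bm K^r)=\widetilde\lambda_0(t)$ and the ODE bookkeeping are routine by comparison.
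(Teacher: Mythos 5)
Your proof is correct, and it reaches the conclusion by a genuinely different route at the one step that matters. Both you and the paper use the same Lyapunov function $V_t=-\sum_{i=1}^n\ln(1-u_i(t))$, the same similarity argument showing $\lambda_{\min}(\bm K^r)=\widetilde\lambda_0(t)\ge C$, and the same derivative identity $\frac{\dd}{\dd t}V_t=-\bm u^\top\bm K^r\bm u\le -C\norm{\bm u}^2$. Where the paper then argues softly by contradiction --- if $V_t\downarrow V_*>0$, the trajectory is eventually trapped in a compact set $\Gamma_\varepsilon$ bounded away from the origin, on which $\frac{\dd}{\dd t}V_t\le -MC<0$, forcing $V_t\to-\infty$ and contradicting $V_t\ge 0$ --- you instead close the loop quantitatively: the uniform bound $u_i(t)\le\bar u=1-\me^{-V_0}$ together with the chord estimate $-\ln(1-x)\le (V_0/\bar u)\,x$ on $[0,\bar u]$ gives $V_t\le (V_0\sqrt n/\bar u)\norm{\bm u}$, hence the Riccati inequality $\frac{\dd}{\dd t}V_t\le -cV_t^2$ and the explicit decay $V_t\le V_0/(1+cV_0t)$. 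Your version buys something the paper's does not: a rate. It shows $u_i(t)=O(1/t)$ and therefore $(2y_i-1)f_t(\x_i)\gtrsim\ln t$, which is exactly the kind of quantitative refinement the paper's Discussion section lists as missing; the paper's compactness argument, in exchange, avoids the chord manipulation and is slightly shorter. Two minor points: your appeal to ``uniqueness of solutions'' to keep $u_i\in(0,1)$ is unnecessary (and slightly off, since the $\bm u$-dynamics are non-autonomous); $u_i(t)\in(0,1)$ is immediate from its definition as a sigmoid of the finite network output, granting global existence of the gradient flow, which both proofs implicitly assume. Also, the division by $V_t$ in the Riccati step should be accompanied by the observation that $V_t>0$ for all finite $t$ (again immediate from finiteness of the parameters), or by noting that if $V_t$ ever vanishes the conclusion is trivial from monotonicity.
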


\begin{proof}
Since $K^r = \diag(2Y-1) K_t(X,X) \diag(2Y-1)^{-1}$, we know $K^r$ share the same eigenvalues as $K_t(X,X)$. Therefore, we have $\lambda_{\min}(K^r) = \widetilde{\lambda}_0 (t) $. Define a function $V(r) = - \sum_{i=1}^n \ln (1-r_i)$, where $r = (r_1,r_2,\cdots,r_n)^\top \in [0,1)^n$. Apparently, we have $V(r) \geq 0$. Given the dynamic of $u$ \eqref{eq: dynamic_of_u}, we can derive the dynamic of $V(u)$:
\begin{equation} \label{eq: dynamic_of_vu}
\begin{aligned}
 \frac{\dd}{\dd t} V(u) &= \sum_{i=1}^n \frac{\partial V}{\partial r_i} \Big|_{r_i = u_i} \frac{\dd u_i }{\dd t} = \sum_{i=1}^n \frac{1}{1-u_i} \frac{\dd u_i}{\dd t}
 \\& = - \sum_{i=1}^n u_i [K_i^r]^\top u = -u^\top K^r u < 0.
\end{aligned}
\end{equation}
According to the monotone convergence theorem, we know $V_t = V(u)$ converges as $t \to \infty$. We can also prove that the limit of $V_t$ is exact zero. We give the proof by contradiction. 

First, we assume that $\lim_{t \to \infty} V_t = V_* > 0$ holds. Then for any $\varepsilon>0$, there exists $t_0 >0$, such that for any $t>t_0$, we have $V_*\leq V_t \leq V_* + \varepsilon$ holds. We define the following set:
\begin{equation}
\begin{aligned}
 \Gamma_\varepsilon & = [0,1-e^{-(V_*+\varepsilon)}]^n \setminus [0,1-e^{\frac{V_*}{n}})^n
 \\&= \Big\{r \in [0,1)^n \Big| 1- \me^{V_*/n} \leq \max_{i = 1,2,\cdots,n} r_i \leq 1-\me^{-(V_*+\varepsilon)} \Big\}.
\end{aligned}
\end{equation}
We can verify that $V^{-1}[V_*,V_* + \varepsilon] \subset \Gamma_\varepsilon$. Therefore, we have $u \in \Gamma_\varepsilon$ when $t>t_0$. Given that $\Gamma_\varepsilon$ is a compact set, we define $M = \min_{r \in \Gamma_\varepsilon} \norm{r}^2 >0$. When $r \in \Gamma_\varepsilon$, we can derive that 
\begin{equation}
 r^\top K^r r \geq \lambda_{\min}(K^r) \norm{r}^2 \geq \lambda_{\min}(K^r) M \geq MC>0,
\end{equation}
then there is a contradiction that
\begin{align*}
&\frac{\dd}{\dd t} V(u) = -u^\top K^r u \leq -MC
\\&\qquad \Longrightarrow V_t \leq V_{t_0} + \int_{t_0}^\top -MC \,\dd t = V_{t_0} -MC(t - t_0) \to - \infty.
\end{align*}
Thus, we prove that the limit of $V_t$ is zero. Based on the limit of $V_t$, we can also get the convergence result of $u_i$:
\begin{equation}
 \begin{aligned}
 & V(u) \to 0
 \\& \Longrightarrow 0<u_i \leq -\ln(1-u_i) \leq -\sum_{i=1}^n \ln(1-u_i) = V(u) \to 0.
 \end{aligned}
\end{equation}
Therefore, we have $\lim_{t \to +\infty}(2y_i - 1)f_t(x_i) = + \infty$ as the time $t$ comes to infinity holds for any $i \in [n]$. In this way, we finish the proof. 
\end{proof}

\begin{remark}\label{rem: general_loss}
The result of Lemma \ref{lem: conv_of_function} can be extended to more general loss functions. Specifically, the influence of the loss function is reflected in \eqref{eq: dynamic_of_vu}. Let \( l(x) \) be a convex loss function with \( l'(x) < 0 \), which appears in training as \( l\left( (2y_i - 1) f(x_i) \right) \). Under these conditions, the proof of Lemma \ref{lem: conv_of_function} still holds. We only need to define
\[
u \coloneqq \frac{\partial l\left( (2y_i -1) f(x_i) \right)}{\partial f(x_i)},
\]
and set \( V(u(\cdot)) = l(\cdot) \) as replacemment. This demonstrates that our proof is general under these assumptions.
\end{remark}

\end{document}